\newcommand{\ucambridge}{\emoji[emojis]{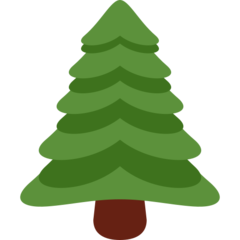}}
\newcommand{\ethz}{\emoji[emojis]{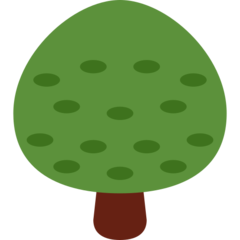}}
\newcommand{\jhu}{\emoji[emojis]{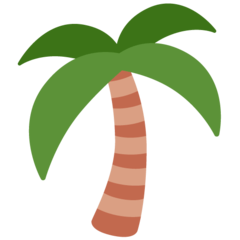}}
\author{
{Ran Zmigrod\raise1.0ex\hbox{\normalfont\ucambridge}\raise1.0ex\hbox{\normalfont}}~\;~Tim Vieira\raise1.0ex\hbox{\normalfont\jhu}~\;~Ryan Cotterell\raise1.0ex\hbox{\normalfont\ucambridge,\ethz}
\\
  \raise1.0ex\hbox{\normalfont\ucambridge}University of Cambridge~\;~\raise1.0ex\hbox{\normalfont\jhu}Johns Hopkins University~\;~\raise1.0ex\hbox{\normalfont\ethz}ETH Z\"{u}rich \\
  \href{mailto:rz279@cam.ac.uk}{\tt rz279@cam.ac.uk}~\;~\href{mailto:tim.f.vieira@gmail.com}{\tt tim.f.vieira@gmail.com} \\ \href{mailto:ryan.cotterell@inf.ethz.ch}{\tt ryan.cotterell@inf.ethz.ch}
}
\date{}
\newtheorem{thm}{Theorem}
\newtheorem{lemma}{Lemma}
\theoremstyle{definition}
\algrenewcommand\algorithmicindent{1.0em}%
\newcommand{\rightcomment}[1]{{\color{gray} \(\triangleright\) {\footnotesize\textit{#1}}}}
\algrenewcommand{\algorithmiccomment}[1]{\hfill \rightcomment{#1}}  
\algnewcommand{\LineComment}[1]{\State\rightcomment{#1}}
\algnewcommand{\LinesComment}[1]{\State\rightcomment{\parbox[t]{.95\linewidth-\leftmargin-\widthof{\(\triangleright\) }}{#1}}}
\algrenewcommand\alglinenumber[1]{{\tiny\color{black!50}#1.}\hspace{-2pt}}
\newcommand{\algorithmicfunc}[1]{\textbf{def} {#1}:}
\crefname{section}{\S}{\S\S}
\Crefname{section}{\S}{\S\S}
\crefname{table}{Tab.}{}
\crefname{figure}{Fig.}{}
\crefname{algorithm}{Alg}{}
\crefname{algorithm}{Alg}{}
\crefname{line}{line}{lines}
\Crefname{line}{Line}{Lines}
\crefname{appendix}{App.}{}
\crefname{thm}{Theorem}{}
\crefname{prop}{Proposition}{}
\crefname{defin}{Definition}{}
\crefname{lemma}{Lemma}{}
\crefname{cor}{Corollary}{}
\crefname{equation}{}{}
\newcommand{\checkNotation}[1]{{#1}}
\newcommand{\defn}[1]{\textbf{#1}}
\renewcommand{\th}[0]{^{\text{th}}}
\renewcommand{\hat}[1]{\widehat{#1}}
\renewcommand{\bar}[1]{\overline{#1}}
\renewcommand{\setminus}[0]{\smallsetminus}
\newcommand{\defeq}[0]{\mathrel{\stackrel{\textnormal{\tiny def}}{=}}}
\newcommand{\bigo}[1]{\mathcal{O}(#1)}
\newcommand{\abs}[1]{\lvert #1 \rvert}
\newcommand{\weight}[1]{\checkNotation{w\!\left(#1\right)\!}}
\let\emptyset\varnothing
\newcommand{\graph}{\checkNotation{G}}
\newcommand{\tree}{\checkNotation{t}}
\newcommand{\trees}{\checkNotation{\mathcal{T}}}
\newcommand{\treesg}[1]{\checkNotation{\trees(#1)}}
\newcommand{\dtrees}{\checkNotation{\mathcal{D}}}
\newcommand{\dtreesg}[1]{\checkNotation{\dtrees(#1)}}
\renewcommand{\root}{\checkNotation{\rho}}
\newcommand{\forestn}[1]{\checkNotation{\tree_{<#1}}}
\newcommand{\treeinclude}[2]{\checkNotation{#1 \succ #2}}
\newcommand{\edges}[0]{\checkNotation{\mathcal{E}}}
\newcommand{\nodes}[0]{\checkNotation{\mathcal{N}}}
\newcommand{\bedge}[2]{\checkNotation{#1\,{\rightarrow}\,#2}}
\newcommand{\bedgeij}[0]{\bedge{i}{j}}
\newcommand{\edge}[2]{(\bedge{#1}{#2})}
\newcommand{\edgeij}[0]{\edge{i}{j}}
\renewcommand{\wedge}[2]{\checkNotation{w_{{\bedge{#1}{#2}}}}}
\newcommand{\wedgeij}{\wedge{i}{j}}
\newcommand{\marginal}[2]{\checkNotation{\prob(\bedge{#1}{#2})}}
\newcommand{\truth}[1]{\checkNotation{\delta_{#1}}}
\newcommand{\mat}[1]{\checkNotation{\mathbf{#1}}}
\newcommand{\matrixelem}[2]{\checkNotation{\mathrm{#1}_{#2}}}
\newcommand{\inv}[1]{#1^{-1}}
\newcommand{\zerovector}{\mat{0}}
\newcommand{\lap}{\mat{L}}
\newcommand{\lapelem}[1]{\matrixelem{L}{#1}}
\newcommand{\lape}{\mat{L^e}}
\newcommand{\lapeelem}[1]{\matrixelem{L^e}{#1}}
\newcommand{\cached}{\checkNotation{\mathbf{B}}}
\newcommand{\cachedelem}[1]{\checkNotation{\mathrm{B}_{#1}}}
\newcommand{\vu}{\mat{u}}
\newcommand{\vv}{\mat{v}}
\newcommand{\rootv}{\checkNotation{\boldsymbol \root}}
\newcommand{\treeset}{\checkNotation{D}}
\newcommand{\treesetij}{\checkNotation{D_{ij}}}
\newcommand{\hittingTime}{\checkNotation{H}}
\newcommand{\Z}{\checkNotation{\mathrm{Z}}}
\newcommand{\prob}{\checkNotation{p}}
\newcommand{\onehot}[1]{\checkNotation{\overrightarrow{\boldsymbol{1}_{#1}}}}
\newcommand{\timesequal}{{\,*\textsf{=}\,}}
\newcommand{\plusequal}{{\,\textsf{+=}\,}}
\newcommand{\minusequal}{{\,\textsf{-=}\,}}
\newcommand{\case}[1]{\checkNotation{\noindent\emph{#1:}}}
\newcommand{\nN}{\checkNotation{N}}
\newcommand{\nK}{\checkNotation{K}}
\newcommand{\real}{\mathbb{R}}
\newcommand{\nonnegreal}{\mathbb{R}_{\ge 0}}
\newcommand{\algFace}[1]{\texttt{#1}}
\newcommand{\algCall}[2]{#1\!\left( #2 \right)}
\newcommand{\colbournAlg}{\algFace{colbourn}}
\newcommand{\sworAlg}{\algFace{swor}}
\newcommand{\wilsonAlg}{\algFace{wilson}}
\newcommand{\wilsonRCAlg}{\algFace{wilson\_rc}}
\newcommand{\sampleEdge}{\algFace{sample\_edge}}
\newcommand{\sampleEdgeSwor}{\algFace{sample\_edge}'}
\newcommand{\sampleEdgeCall}[1]{\algCall{\sampleEdge}{#1}}
\newcommand{\condition}{\algFace{condition}}
\newcommand{\conditionSwor}{\algFace{condition}'}
\newcommand{\conditionCall}[1]{\algCall{\condition}{#1}}
\newcommand{\lapAlg}[0]{\algFace{Laplacian}}
\newcommand{\lapAlgCall}[1]{\algCall{\lapAlg}{#1}}
\newcommand{\monstersum}[1]{{\sum\limits_{\mathclap{\substack{#1}}}}}  
\newcommand{\monsterprod}[1]{{\prod_{\mathclap{\substack{#1}}}}}  
\newcommand{\ars}{arborescences\xspace}
\newcommand*\nodeId[1]{\tikz[baseline]{
            \node[shape=circle,draw,inner sep=1pt, text centered, text depth=0.2mm] () at (0,0.07) {\tiny $#1$};}}
\definecolor{antiquefuchsia}{rgb}{0.57, 0.36, 0.51}
\definecolor{newline}{rgb}{0.53, 0.15, 0.34}
\definecolor{darkslategray}{rgb}{0.18, 0.31, 0.31}
\definecolor{darkspringgreen}{rgb}{0.09, 0.45, 0.27}
\definecolor{britisharmygreen}{rgb}{0.0, 0.26, 0.15}
\newcommand{\globalvariable}[1]{{\color{darkspringgreen} #1}}
\title{Efficient Sampling of Dependency Structures}
\begin{document}
\maketitle

\begin{abstract}
Probabilistic distributions over spanning trees in directed graphs are a fundamental model of dependency structure in natural language processing, syntactic dependency trees.
In NLP, dependency trees often have an additional root constraint: only one edge may emanate from the root.
However, no sampling algorithm has been presented in the literature to account for this additional constraint.
In this paper, we adapt two spanning tree sampling algorithms to  sample dependency trees from a graph subject to the root constraint.
\citet{wilson96}'s sampling algorithm has a running time of $\bigo{\hittingTime}$ where $\hittingTime$ is the mean hitting time of the graph.
\citet{colbourn96}'s sampling algorithm has a running time of $\bigo{\nN^3}$, which is often greater than the mean hitting time of a directed graph.
Additionally, we build upon Colbourn's algorithm and present a novel extension that can sample $\nK$ trees without replacement in $\bigo{\nK\nN^3+\nK^2\nN}$ time.
To the best of our knowledge, no algorithm has been given for sampling spanning trees without replacement from a directed graph.\footnote{Our implementation of these algorithms is publicly available at \url{https://github.com/rycolab/treesample}.}

\end{abstract}

\section{Introduction}
Spanning trees in directed graphs\footnote{\emph{Directed} spanning trees are known as \emph{\ars} in the graph theory literature \citep{williamson}.  However, we will simply refer to them as spanning trees.} are fundamental combinatorial structures in natural language processing 
where they are used to represent dependency structures---especially syntactic dependency structure \citep{dp}.
Additionally, probabilistic models over spanning trees 
are common in the NLP literature with applications primarily in
non-projective dependency parsing \citep{pei-etal-2015-effective, wang-chang-2016-graph, dozat, ma},
but also in recovering phylogenic structures \citep{andrews-etal-2012-name}, 
and event extraction \citep{mcclosky-etal-2011-event}.

Given the prevalence of such probabilistic models, efficient dependency tree sampling algorithms deserve study.
Indeed, some work has been done in transition-based dependency parsing \citep{keith-etal-2018-monte} as well as graph-based dependency parsing \citep{nakagawa-2007-multilingual, marecek-zabokrtsky-2011-gibbs}.
Sampling has also been utilized in an abundance of NLP tasks,
such as text generation \citep{clark-etal-2018-neural, fedusGD18}, 
co-reference resolution \citep{singh-etal-2012-monte},
and language modeling \citep{mnihH07, logan-iv-etal-2020-importance}.

The theoretical computer science literature has several efficient algorithms for sampling directed spanning trees.
These algorithms come in two flavors.
First, random walks through Markov chains have been used to sample spanning trees from both undirected \citep{broder89, aldous90} and directed graphs \citep{wilson96}.
The algorithm of \citet{wilson96} is linear in the mean hitting time of the graph and is currently the fastest sampling algorithm for directed spanning trees.
It has been used in dependency parsing inference by \citet{zhang-etal-2014-greed, zhang-etal-2014-steps}.
Second, several algorithms have leveraged the matrix--tree theorem \citep[MTT;][]{kirchhoff, tutte1984graph}.
The MTT has been frequently used to perform inference on non-projective graph-based dependency parsers \citep{koo-et-al-2007, mcdonald-satta-2007,smith-smith-2007, zmigrod-2021-tacl}.
This theorem was first used for sampling by \citet{guenoche83} who gave an $\bigo{\nN^5}$ algorithm which was then improved by \citet{kulkarni90} and \citet{colbourn96}.
\citet{colbourn96} give an $\bigo{\nN^3}$ algorithm to sample spanning trees from an \emph{unweighted} directed graph.  We generalize their algorithm to the weighted case.

While directed spanning tree sampling algorithms exist, an important constraint of many dependency tree schemes, such as the Universal Dependency (UD) scheme \cite{ud}, is that a dependency tree may only have one edge emanating from the designated root symbol.
Algorithms exists for enforcing this constraint in decoding \citep{GabowT84, zmigrod-etal-2020-please, stanojevic-cohen-2021} and inference \citep{koo-et-al-2007, zmigrod-2021-tacl}.
However, to the best of our knowledge, no sampling algorithm exists which enforces the root constraint.

In this paper, we adapt the algorithms of \citet{wilson96} and \citet{colbourn96}\footnote{Note that \citet{colbourn96} presents two algorithms for sampling directed spanning trees.  In this work, we focus on their first algorithm, which runs in $\bigo{\nN^3}$. While the second algorithm is based on a reduction to fast matrix multiplication, which is typically impractical.
We did not extend this algorithm because it is not amendable to sampling without replacement and it is generally slower than the algorithm of \citet{wilson96}.} to efficiently sample directed spanning trees subject to a root constraint while maintaining the runtime of the original algorithms.
While our modification to \citet{colbourn96}'s algorithm faithfully samples trees subject to the root constraint, our extension to \citet{wilson96}'s algorithm produces biased samples.
Additionally, we provide a further extension to \citet{colbourn96}'s algorithm that allows us to sample trees without replacement.
Sampling without replacement (SWOR) algorithms are useful when distributions are skewed, which is often the case in a trained system.
To the best of our knowledge, no SWOR algorithm has been presented in the literature for directed spanning trees, though \citet{shi2020} provides a general framework that enables a SWOR algorithm to be adapted for particular kinds of sampling algorithms.

\section{Distributions over Trees} 
We consider distributions over spanning trees in \defn{rooted directed weighted graphs} (\defn{graphs} for short).
A graph is denoted by $\graph=(\root, \nodes, \edges)$ where $\nodes$ is a set of $\nN+1$ nodes including a designated root node $\root$ and $\edges$ is a set of ordered pairs between two nodes $\edgeij$.
We note that the non-root nodes $\nodes\setminus\{\root\}$ can be enumerated as $[1,\dots,\nN]$.
For each edge $\edgeij\in\edges$, we associate a non-negative weight $\wedgeij\in\nonnegreal$.
Note that $\wedgeij=0$ when $\edgeij\not\in\edges$.

A \defn{directed spanning tree}, denoted by $\tree$, is a collection of edges in a graph $\graph$ such that each node $j\in\nodes\setminus\{\root\}$ has exactly one incoming edge and $\tree$ contains no cycles.
Moreover, we specifically examine distributions over \defn{dependency trees}, which are spanning trees with an additional constraint that exactly one edge must emanate from the root $\root$.
This additional constraint is common amongst most dependency tree annotation schemes, e.g., Universal Dependencies.\footnote{However, we note that there are exceptions that do not require the root constraint, such as the Prague Treebank \citep{prague_dep}.}
When the type of tree (spanning or dependency) is clear from context, we will simply use \defn{trees}.
The set of all spanning trees of a graph $\graph$ is given by $\treesg{\graph}$ and the set of all dependency trees of a graph $\graph$ is given by $\dtreesg{\graph}$.
Note that $\dtreesg{\graph}\subseteq\treesg{\graph}$.
When the graph $\graph$ is clear from context, we will refer to these sets $\trees$ and $\dtrees$ respectively.

The \defn{weight} of a tree $\tree$ is the product of its edge weights:
\begin{equation}
    \weight{\tree} \defeq \monsterprod{\edgeij\in\tree}\,\wedgeij
\end{equation}
The \defn{probability} of a spanning tree is then given by
\begin{equation}
    \prob(\tree) \defeq \frac{\weight{\tree}}{\Z} 
\quad\text{where}\quad
    \Z \defeq \sum_{\tree\in\trees} \weight{\tree}
\end{equation}
The normalizing constant in the case of dependency trees sums over $\dtrees$ instead of $\trees$.\looseness=-1

\tikzset{cross/.style={cross out, draw=black, minimum size=2*(#1-\pgflinewidth), inner sep=0pt, outer sep=0pt}, cross/.default={2pt}}

\tagmcbegin{tag=Figure,alttext={The figure contains two random walks. The first random in subfigure a contains the walk of the following nodes: 1, 2, 3, 4, 2, 5, and then root. The cycle from node 3 to 4 to 2 is circled in red. Subfigure b contains the same walk but without the cycle. That is, we have the tree path: 1, 2, 5, and then root.}}
\begin{figure*}[t!]
    \centering
    \begin{subfigure}[b]{.9\linewidth}
\centering
\begin{tikzpicture}
\begin{scope}[every node/.style={circle,thick,draw, inner sep=3pt}]
    \node (1) at (0, 0) {\Large $1$};
    \node (2) at (2, 0) {\Large $2$};
    \node (3) at (4, 0) {\Large $3$};
    \node (4) at (6, 0) {\Large $4$};
    \node (5) at (8, 0) {\Large $2$};
    \node (6) at (10, 0) {\Large $5$};
    \node (7) at (12, 0) {\Large $\root$};
\end{scope}
\begin{scope}[>=latex,
              every node/.style={fill=white, inner sep=0.5pt},
              every edge/.style={draw, thick}]
    \path [->] (1) edge[] (2);
    \path [->] (2) edge[] (3);
    \path [->] (3) edge[] (4);
    \path [->] (4) edge[] (5);
    \path [->] (5) edge[] (6);
    \path [->] (6) edge[] (7);
\end{scope}
\fill[fill=red, opacity=0.1] (6,0) ellipse (3cm and 1cm);
\end{tikzpicture}
\caption{Random walk path from node $\nodeId{1}$ to $\nodeId{\root}$.}
\label{walk:a}
\end{subfigure}

\hspace{30pt}

\begin{subfigure}[b]{.9\linewidth}
\centering
\begin{tikzpicture}
\begin{scope}[every node/.style={circle,thick,draw, inner sep=3pt}]
    \node(1) at (0, 0) {\Large $1$};
    \node (2) at (2, 0) {\Large $2$};
    \node (6) at (4, 0) {\Large $5$};
    \node (7) at (6, 0) {\Large $\root$};
\end{scope}
\begin{scope}[>=latex,
              every node/.style={fill=white, inner sep=0.5pt},
              every edge/.style={draw, thick}]
    \path [->] (1) edge[] (2);
    \path [->] (2) edge[] (6);
    \path [->] (6) edge[] (7);
\end{scope}
\end{tikzpicture}
\caption{Tree path from node $\nodeId{1}$ to $\nodeId{\root}$.}
\label{walk:b}
\end{subfigure}

\hspace{20pt}

    \caption{
    Cycle erasure in a random walk of a graph.
    The associated graph of the above walk has six nodes (including the root node $\nodeId{\root}$) and we start the walk at $\nodeId{1}$ with the tree only containing the root node.
    The random walk includes a cycle with nodes $\nodeId{2}$, $\nodeId{3}$, and $\nodeId{4}$, this cycle is erased when we create the path from $\nodeId{1}$ back up to $\nodeId{\root}$.
    Note that the arrows here mark the path upwards rather than the edges in the tree, the tree edges are reversed (e.g, $\nodeId{\root}\rightarrow\nodeId{5}$).
    }
    \label{fig:walk}
\end{figure*}
\tagmcend

\section{Random Walk Sampling}\label{sec:wilson}
In this section, we present the spanning tree sampling algorithm of \citet{wilson96} and adapt it to sample dependency trees.
The algorithm is based on a random walk through the nodes of the graph until a tree is formed.
In order to do this, we require a graph $\graph$ to be \defn{stochastic}.
A stochastic graph is such that the weights of all incoming edges to a node sum to one.
That is, for all non-root nodes $j\in\nodes\setminus\{\root\}$ we have that
\begin{equation}
    \sum_{i\in\nodes}\wedge{i}{j} = 1
\end{equation}
\noindent \citet{wilson96} shows that any graph may be converted into a stochastic graph, by adjusting the edge weights to be locally normalized along all incoming edges.
Therefore, the edge $\edgeij$ has a weight $w'_{\bedgeij}$ defined by
\begin{equation}
    w'_{\bedgeij} \defeq \frac{\wedgeij}{\sum_{i'\in\nodes}\wedge{i'}{j}}
\end{equation}
We can also define this new weight over a tree $w'(\tree)\defeq\prod_{\edgeij\in\tree}w'_{\bedgeij}$.

\begin{lemma}\label{lem:prop}
For any tree $\tree\in\trees$,
\begin{equation}
    w'(\tree)\propto \weight{\tree}
\end{equation}
\end{lemma}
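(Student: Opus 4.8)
The plan is to show directly that $w'(\tree)$ and $\weight{\tree}$ differ only by a multiplicative factor that does not depend on the choice of $\tree$. First I would unfold the definition of $w'(\tree)$ together with the definition of the locally-normalized edge weights:
\[
    w'(\tree) = \monsterprod{\edgeij\in\tree} w'_{\bedgeij} = \monsterprod{\edgeij\in\tree} \frac{\wedgeij}{\sum_{i'\in\nodes}\wedge{i'}{j}}.
\]
The crucial observation is that, because $\tree$ is a spanning tree, every non-root node $j\in\nodes\setminus\{\root\}$ has exactly one incoming edge in $\tree$. Hence a product indexed by the edges $\edgeij\in\tree$ is the same as a product indexed by the non-root nodes $j$, where each node $j$ contributes the term corresponding to its unique incoming edge. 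Applying this reindexing to the numerators and denominators above, the denominators decouple from $\tree$ entirely, and I obtain
\[
    w'(\tree) = \left(\monsterprod{\edgeij\in\tree}\wedgeij\right) \cdot \left(\monsterprod{j\in\nodes\setminus\{\root\}} \frac{1}{\sum_{i'\in\nodes}\wedge{i'}{j}}\right) = C \cdot \weight{\tree},
\]
where $C \defeq \monsterprod{j\in\nodes\setminus\{\root\}} \frac{1}{\sum_{i'\in\nodes}\wedge{i'}{j}}$ depends only on $\graph$. The first factor is $\weight{\tree}$ by the same reindexing applied to the definition of the tree weight, and the second factor is the constant $C$, which gives the claimed proportionality.

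I would close with a one-line remark on the degenerate case: we may assume $\sum_{i'\in\nodes}\wedge{i'}{j}>0$ for every non-root $j$, since otherwise no spanning tree carries positive weight (and the stochastic reweighting is undefined), so the claim is vacuous there. There is no genuinely hard step in this argument; the one thing to get right is the reindexing of the product over tree edges as a product over non-root nodes, which is precisely what allows the $\nN$ local normalization constants to be collected into a single $\tree$-independent factor.
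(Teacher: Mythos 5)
Your proof is correct and follows essentially the same route as the paper's: expand $w'(\tree)$, split off the product of local normalizers, and observe that because every non-root node has exactly one incoming edge in a spanning tree, that product reindexes to $\prod_{j\in\nodes\setminus\{\root\}}\bigl(\sum_{i'\in\nodes}\wedge{i'}{j}\bigr)^{-1}$, a tree-independent constant. Your added remark on the degenerate case of a zero normalizer is a small point the paper omits, but otherwise the two arguments coincide.
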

\begin{proof}
\begin{subequations}
\begin{align}
    w'(\tree) &= \prod_{\edgeij\in\tree} w'_{\bedgeij} \\
    &= \prod_{\edgeij\in\tree} \frac{\wedgeij}{\sum_{i'\in\nodes}\wedge{i'}{j}} \\
    &=  \!\left(\!\prod_{\edgeij\in\tree} \!\frac{1}{\sum_{i'\in\nodes}\wedge{i'}{j}}\!\right) \!\!\!\! \left(\!\prod_{\edgeij\in\tree} \wedgeij\!\right) \nonumber \\
    &= \!\! \underbrace{\left(\!\prod_{\edgeij\in\tree} \!\frac{1}{\sum_{i'\in\nodes}\wedge{i'}{j}}\!\right)}_{\mathrm{constant}} \! \weight{\tree}  \label{eq:constant} \\
    &\propto \weight{\tree}
\end{align}
\end{subequations}
Note that the left-hand term in \cref{eq:constant} is a constant since every non-root node has in incoming edge in the tree, and so the constant is equal to
$\prod_{i\in\nodes\setminus\{\root\}}\left(\sum_{i'\in\nodes}\wedge{i'}{j}\right)^{-1}$.
\end{proof}

\begin{figure}[t!]
    \centering
    \begin{algorithmic}[1]
    \Func{$\algCall{\wilsonAlg}{\graph}$}
    \LinesComment{Sample a spanning tree from a graph $\graph$; requires $\bigo{\hittingTime}$ time, $\bigo{\nN^2}$ space.}
        \State $\tree \gets \zerovector$
        \State $\mathrm{visited}\gets \{ \root \}$
        \LinesComment{The following for and while loops take $\bigo{\hittingTime}$ to execute. More specifically, \cref{line:sample} is called $\bigo{\hittingTime}$ times.}
        \For{$i\in\nodes\setminus\{\root\}$}
            \State $u\gets i$
            \While{$u\not\in\mathrm{visited}$}
                \State\label{line:sample} Sample $v\!\in\!\nodes$ with weight $w'_{\bedge{v}{u}}$ 
                \State $\tree_{u}\gets v$
                \State $u \gets v$
            \EndWhile
            \State $u\gets i$
            \While{$u\not\in\mathrm{visited}$}
                \State $\mathrm{visited}.\mathrm{add}(u)$
                \State $u \gets v$ such that $\tree_{u}=v$ 
            \EndWhile
        \EndFor
        \State \Return $\tree$
    \EndFunc
    \end{algorithmic}
    \caption{\citet{wilson96}'s algorithm to sample spanning trees from a graph.}
    \label{alg:wilson}
\end{figure}

A stochastic graph
then defines a \defn{Markov chain} that we can perform a random walk on.
We begin with a tree that is just composed of the root $\root$.
Then while there exists a node connect to the tree, we start a random walk where we add each edge used to the tree, until we encounter a node that is connected to the tree.
When we have connected to the tree, we can proceed to start a new random walk from a node not in the tree.

Of course, during a random walk, we may go through a cycle.
Whenever we walk through a cycle, we simply forget the cycle as part of the walk.
That is, if we find ourselves visiting node $i$ in our walk for a second time, we erase the cyclic path formed at $i$ and continue our walk.
This can be seen visually in \cref{fig:walk}.
This type of walk is known as a loop-erased random walk \citep{lawler1979self}.
Given we sample a sequence of edges $\mathcal{S}$ in a random walk, we can split the
We can split the edges into those corresponding to the loop-erased random walk, $\hat{\mathcal{S}}$, and those that do not, $\bar{\mathcal{S}}$.
\begin{lemma}\label{lem:ind}
For any sequence of edges $\mathcal{S}$ sampled from a random walk, we have
\begin{equation}
\prob(\mathcal{S}) = \prob(\hat{\mathcal{S}})\, \prob(\bar{\mathcal{S}})
\end{equation}
\end{lemma}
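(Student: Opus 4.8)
The plan is to decompose the probability of the full walk sequence $\mathcal{S}$ multiplicatively according to which edges survive loop-erasure and which do not. Recall that in a loop-erased random walk, whenever we revisit a node $i$, we delete exactly the cyclic segment of edges that were traversed since we first arrived at $i$. So the sequence $\mathcal{S}$ partitions cleanly into $\hat{\mathcal{S}}$ (the edges of the loop-erased path that actually end up in the tree) and $\bar{\mathcal{S}}$ (the edges belonging to erased cycles), with $\mathcal{S}$ being the concatenation of these edges in the order they were sampled. Since $\prob(\mathcal{S})$ is, by the definition of a random walk on the stochastic graph, simply the product $\prod_{\bedge{v}{u}\in\mathcal{S}} w'_{\bedge{v}{u}}$ over all sampled edges (each step $u \to v$ being sampled independently with probability $w'_{\bedge{v}{u}}$ given the current node $u$), the product factors over any partition of the multiset of sampled edges.

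Concretely, I would first note that the Markov property gives
\begin{equation}
\prob(\mathcal{S}) = \prod_{\bedge{v}{u}\in\mathcal{S}} w'_{\bedge{v}{u}},
\end{equation}
because the walk samples each next node conditioned only on the current node, and the edge weights $w'$ are precisely these transition probabilities on the stochastic graph. Then I would observe that every edge sampled during the walk is, by the mechanics of cycle erasure, classified as either surviving (in $\hat{\mathcal{S}}$) or erased (in $\bar{\mathcal{S}}$), and these two sets are disjoint and exhaust $\mathcal{S}$. Splitting the product accordingly yields
\begin{equation}
\prob(\mathcal{S}) = \left(\prod_{\bedge{v}{u}\in\hat{\mathcal{S}}} w'_{\bedge{v}{u}}\right)\left(\prod_{\bedge{v}{u}\in\bar{\mathcal{S}}} w'_{\bedge{v}{u}}\right) = \prob(\hat{\mathcal{S}})\,\prob(\bar{\mathcal{S}}),
\end{equation}
where I would define $\prob(\hat{\mathcal{S}})$ and $\prob(\bar{\mathcal{S}})$ as the corresponding sub-products (which is the natural reading consistent with the paper's earlier notation $w'(\tree)=\prod_{\edgeij\in\tree}w'_{\bedgeij}$).

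The main subtlety — and the step I expect to need the most care — is making precise what $\prob(\hat{\mathcal{S}})$ and $\prob(\bar{\mathcal{S}})$ actually denote, since $\mathcal{S}$ is a random object whose decomposition depends on the walk itself. The cleanest framing is to condition on a particular realized walk $\mathcal{S}$: given that realization, $\hat{\mathcal{S}}$ and $\bar{\mathcal{S}}$ are determined, and the identity is just the deterministic factorization of the product of transition probabilities along that realized trajectory. One should be slightly careful that the "probability" of a sampled sequence here means the product of the step-wise transition probabilities (the likelihood of that trajectory under the walk), not a marginal over all trajectories producing the same loop-erased path; with that reading, the lemma is essentially the statement that this likelihood factors over the survive/erase partition of the edge multiset, which follows immediately from commutativity and associativity of multiplication once the Markov-chain factorization is in place. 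I would close by remarking that this independence-style factorization is exactly what is needed downstream to argue that the loop-erased walk produces trees with the correct probabilities, since the $\bar{\mathcal{S}}$ factor can be summed out.
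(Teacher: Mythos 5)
Your proof is correct and matches the paper's own argument exactly: the paper likewise writes $\prob(\mathcal{S})=\prod_{\edgeij\in\mathcal{S}} w'_{\bedgeij}$ via the Markov-chain factorization and then splits the product over the partition of $\mathcal{S}$ into $\hat{\mathcal{S}}$ and $\bar{\mathcal{S}}$. Your added care about what $\prob(\hat{\mathcal{S}})$ and $\prob(\bar{\mathcal{S}})$ denote is a reasonable elaboration but does not change the argument.
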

\begin{proof}
\begin{subequations}
\begin{align}
    \prob(\mathcal{S}) &= \smashoperator{\prod_{\edgeij\in\mathcal{S}}} w'_{\bedgeij} \\
    &= \left(\ \ \ \ \smashoperator{\prod_{\edgeij\in \hat{\mathcal{S}}}} w'_{\bedgeij}\right) \!\! \left(\ \ \ \ \smashoperator{\prod_{\edgeij\in\bar{\mathcal{S}}}} w'_{\bedgeij}\right) \nonumber \\
    &= \prob(\hat{\mathcal{S}})\, \prob(\bar{\mathcal{S}})
\end{align}
\end{subequations}
\end{proof}

Pseudocode for \citet{wilson96}'s algorithm  is given in \cref{alg:wilson}.
Examining the pseudocode, one can see it is possible to infinitely encounter cycles.
Fortunately, \citet{wilson96} proves that this is not the case and that the algorithm has a probabilistic bound of $\bigo{\hittingTime}$ where $\hittingTime$ is the mean hitting time of the graph.
The \defn{mean hitting time} of a Markov chain is defined as
\begin{equation}
    \hittingTime \defeq \sum_{i,j} \pi_i \pi_j h(i, j)
\end{equation}
where $\pi$ is the stationary distribution of the Markov chain, and $h(i, j)$ is the expected number of steps to reach node $j$ starting at node $i$.
\citet{wilson96} and \citet{broder89} demonstrate that the mean hitting time for directed graphs is usually small (sometimes as low as linear in $\nN$).
We will compare the empirical runtime of the algorithm against an $\bigo{\nN^3}$ algorithm in \cref{sec:experiment}.

\begin{thm}
\label{thm:wilson}
For any graph $\graph$, $\algCall{\wilsonAlg}{\graph}$ samples a directed spanning tree $\tree\in\trees$ with probability
\begin{equation}
    \prob(\tree)\propto\prod_{\edgeij\in\tree}\wedgeij
\end{equation}
Furthermore, $\wilsonAlg$ runs in $\bigo{\hittingTime}$ time.
\end{thm}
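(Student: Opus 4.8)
The plan is to establish two things separately: (i) $\wilsonAlg$ terminates almost surely and always returns an element of $\trees$, and moreover the returned tree $\tree$ appears with probability proportional to $\prod_{\edgeij\in\tree}\wedgeij$; and (ii) its expected running time is $\bigo{\hittingTime}$. I assume $\treesg{\graph}\neq\emptyset$, which is equivalent to every non-root node being reachable from $\root$; otherwise the claim is vacuous.

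For termination and well-formedness, I would argue as follows. The outer \textbf{for} loop processes each non-root node once; the first \textbf{while} loop is a random walk in the stochastic graph that from the current node $u$ moves to $v$ with probability $w'_{\bedge{v}{u}}$, i.e.\ it traverses edges backwards, and it stops upon hitting $\mathrm{visited}$, a set that always contains $\root$. Since every non-root node is reachable from $\root$, this walk reaches $\mathrm{visited}$ in finite time with probability one, and the second \textbf{while} loop then marks the newly connected nodes. By induction over the \textbf{for} loop, after the last iteration every non-root node $u$ has exactly one pointer $\tree_u$, and following pointers from any node reaches $\root$; reversing these pointers is therefore a directed spanning tree rooted at $\root$, i.e.\ an element of $\trees$.

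For the distribution, fix a target $\tree\in\trees$. The event ``$\wilsonAlg$ returns $\tree$'' is the disjoint union, over all edge sequences $\mathcal{S}$ realizable by the walks whose loop-erasure $\hat{\mathcal{S}}$ equals the edge set of $\tree$, of the event that the sampled sequence equals $\mathcal{S}$, so $\prob(\tree)=\sum_{\mathcal{S}:\,\hat{\mathcal{S}}=\tree}\prob(\mathcal{S})$. By \cref{lem:ind} each summand factors as $\prob(\hat{\mathcal{S}})\,\prob(\bar{\mathcal{S}})=w'(\tree)\,\prob(\bar{\mathcal{S}})$, whence $\prob(\tree)=w'(\tree)\cdot C_{\tree}$ with $C_{\tree}\defeq\sum_{\mathcal{S}:\,\hat{\mathcal{S}}=\tree}\prob(\bar{\mathcal{S}})$. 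The crux is to show $C_{\tree}$ does not depend on $\tree$. I would do this through the cycle-popping view of \citet{wilson96}: give each non-root node an infinite i.i.d.\ stack of proposed predecessors drawn from $w'_{\bedge{\cdot}{u}}$, let the stack tops define the current arrows, and repeatedly pop a cycle whenever one is present; the multiset of popped cycles (with their stack contents) is independent of the order of popping, and $\wilsonAlg$ realizes one legal order, so the erased part $\bar{\mathcal{S}}$ has a distribution that never references $\tree$. Hence $C_{\tree}=C$ is a tree-independent constant, giving $\prob(\tree)\propto w'(\tree)$, and \cref{lem:prop} upgrades this to $\prob(\tree)\propto\prod_{\edgeij\in\tree}\wedgeij$, with $\Z$ fixing the constant.

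For the running time, I would invoke \citet{wilson96}'s analysis that the total number of executions of \cref{line:sample} — all random-walk steps, including those subsequently erased — is $\bigo{\hittingTime}$ in expectation. Each such step samples $v$ from the fixed distribution $w'_{\bedge{\cdot}{u}}$ over the $\nN+1$ nodes, which after $\bigo{\nN^2}$ preprocessing (e.g.\ per-node alias tables, matching the stated space bound) costs $\bigo{1}$; the bookkeeping in the second \textbf{while} loop is dominated by the first. Summing yields the $\bigo{\hittingTime}$ bound. I expect the main obstacle to be precisely the order-independence of cycle popping that makes $C_{\tree}$ tree-independent — this is the heart of Wilson's theorem; the remaining steps are routine or direct appeals to \cref{lem:prop,lem:ind}.
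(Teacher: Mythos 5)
Your proposal is correct and follows the same overall skeleton as the paper's proof: reduce the tree distribution to the locally normalized weights via \cref{lem:ind}, transfer back to the original weights via \cref{lem:prop}, and cite \citet{wilson96} for the $\bigo{\hittingTime}$ bound on the number of executions of the sampling line. The substantive difference is that you make explicit a step the paper elides. The paper argues directly from \cref{lem:ind} that the edges of $\tree$ are sampled independently of the edges in $\mathcal{S}\setminus\tree$ and immediately concludes $\prob(\tree)=\prod_{\edgeij\in\tree}w'_{\bedgeij}$; but \cref{lem:ind} only factors the probability of a \emph{single} realized edge sequence, whereas the event of returning $\tree$ is a union over all sequences $\mathcal{S}$ with $\hat{\mathcal{S}}=\tree$. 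Your decomposition $\prob(\tree)=w'(\tree)\cdot C_{\tree}$ isolates exactly what remains to be shown --- that $C_{\tree}=\sum_{\mathcal{S}:\,\hat{\mathcal{S}}=\tree}\prob(\bar{\mathcal{S}})$ is independent of $\tree$ --- and you correctly identify Wilson's cycle-popping (stack) argument as the tool that establishes this. In that sense your write-up is more careful than the paper's; the only caveat is that the order-independence of cycle popping is itself the nontrivial core of Wilson's theorem, so a fully self-contained proof would need to reproduce that argument rather than sketch it, just as the paper ultimately defers to \citet{wilson96} for both correctness and the running-time bound.
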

\begin{proof}
It is clear that if $\algCall{\wilsonAlg}{\graph}$ terminates, $\tree$ will contain a directed spanning tree.
Let $\mathcal{S}$ be the set of edges sampled in \cref{line:sample} of $\wilsonAlg$. 
As we sample $\tree$ using several independent random walks, by \cref{lem:ind}, we sample the edges in $\tree$ independently of the edges in $\mathcal{S}\setminus \tree$.
In particular, we sample $\tree$ with probability
\begin{equation}
    \prob(\tree) = \prod_{\edgeij\in\tree} w'_{\bedgeij}
\end{equation}
By \cref{lem:prop}, $\tree$ is a sample from our desired distribution over trees.

\citet[Theorem 3]{wilson96} proves that \cref{line:sample} of $\wilsonAlg$ executes $\bigo{H}$ times before the program terminates with tree $\tree$.
\end{proof}

\subsection{Root Constraint Modification}
\citet{wilson96}'s algorithm does not ensure that only one edge emanates from the root.
However, a simple modification to the algorithm allows us to do this.
The original algorithm samples a spanning tree rooted at $\root$ for a graph $\graph$.
Suppose we know we want the edge $\edge{\root}{j}$ to be the single edge emanating from the root.
Then we can run Wilson's algorithm on the graph $\graph'$, which is $\graph$ with node $\root$ removed and node $j$ defined as the new root.
By adding $\edge{\root}{j}$ to the newly sampled tree, we clearly have a dependency tree in $\graph$.
We can sample the root edge $\edge{\root}{j}$ from the root weights $\rootv$ to sample an unbiased dependency tree from the distribution.
The pseudocode for this algorithm is given as $\wilsonRCAlg$ in \cref{alg:wilson-rc}

\begin{figure}[t!]
    \centering
    \begin{algorithmic}[1]
    \Func{$\algCall{\wilsonRCAlg}{\graph}$}
    \LinesComment{Sample a dependency tree from a graph $\graph$; requires $\bigo{\hittingTime}$ time, $\bigo{\nN^2}$ space.}
        \State {\color{newline} Sample $j$ with weight $\wedge{\root}{j}$ \label{line:new}}
        \State $\tree \gets \zerovector$
        \State {\color{newline} $\tree_j \gets \root$}
        \State $\mathrm{visited}\gets\{ j \}$ \label{line:start}
        \LinesComment{The following for and while loops take $\bigo{\hittingTime}$ to execute. More specifically, \cref{line:sample2} is called $\bigo{\hittingTime}$ times.}
        \For{$i\in\nodes\setminus\{\root\}$}
            \State $u\gets i$
            \While{$u\not\in\mathrm{visited}$}
                \State\label{line:sample2} {\color{newline}Sample $v\!\in\!\nodes\!\setminus\!\{\root\}$ with weight $w'_{\bedge{v}{u}}$}
                \State $t_{u}\gets v$ 
                \State $u \gets v$
            \EndWhile
            \State $u\gets i$
            \While{$u\not\in\mathrm{visited}$}
                \State $\mathrm{visited}.\mathrm{add}(u)$
                \State $u \gets v$ such that $t_{u}=v$
            \EndWhile
        \EndFor
        \State \Return $\tree$ \label{line:end}
    \EndFunc
    \end{algorithmic}
    \caption{Modification of \citet{wilson96}'s algorithm to sample dependency trees from a biased distribution. The lines that differ to $\wilsonAlg$ are {\color{newline}highlighted}.}
    \label{alg:wilson-rc}
\end{figure}

\begin{thm}
For any graph $\graph$, $\algCall{\wilsonRCAlg}{\graph}$ samples a dependency tree from a biased distribution.\footnote{
Note that this theorem has been corrected following \citet{stanojevic2022}, who showed that $\wilsonRCAlg$ returns a biased sample by counter-example. We have included their counter-example into our proof.
They additionally provide an alternative solution to extend $\wilsonAlg$ to use rejection sampling in order to sample a dependency tree.
They show that on average $\wilsonAlg$ would be run three times to find a succesful dependency tree sample, thus maintaining the $\bigo{\hittingTime}$ runtime.
We refer the reader to \citet[Section 4]{stanojevic2022} for further details.
}
Furthermore, $\wilsonRCAlg$ runs in $\bigo{\hittingTime}$ in.
\end{thm}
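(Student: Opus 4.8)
The plan is to decompose the statement into three parts and handle them in turn: (a) every terminating run of $\algCall{\wilsonRCAlg}{\graph}$ returns a valid dependency tree; (b) the running time is $\bigo{\hittingTime}$; and (c) the distribution induced on $\dtrees$ is in general \emph{not} the target $\prob(\tree)=\weight{\tree}/\Z$, which I certify with a small explicit counterexample (following \citet{stanojevic2022}). For (a): the edge $\edge{\root}{j}$, with $j$ drawn in \cref{line:new}, is by construction the only edge leaving $\root$ in the returned $\tree$, since the inner walk of \cref{line:sample2} only ever proposes non-root nodes. Writing $\graph'$ for $\graph$ with $\root$ deleted and $j$ promoted to the root, the for/while loops are exactly $\wilsonAlg$ run on $\graph'$ --- the weights used, $w'_{\bedge{v}{u}}$, are restricted to non-root targets and hence are not renormalised for $\graph'$, but they remain proportional to the $\graph'$-local normalised weights, so the loop-erased walk is genuinely Wilson's walk on $\graph'$ --- seeded with $j$ already marked as visited. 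By \cref{thm:wilson} applied to $\graph'$, the edges accumulated over $\nodes\setminus\{\root\}$ form a spanning tree of $\graph'$ rooted at $j$; adjoining $\edge{\root}{j}$ yields a spanning tree of $\graph$ with exactly one root edge, i.e.\ a dependency tree.

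For (b): \cref{line:new} together with the initialisations costs $\bigo{\nN}$, and the remaining loops are $\wilsonAlg(\graph')$, so the running-time half of \cref{thm:wilson} (i.e.\ \citet[Theorem~3]{wilson96}) applied to $\graph'$ shows \cref{line:sample2} is invoked $\bigo{\hittingTime}$ times; hence the total running time is $\bigo{\hittingTime}$.

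The substantive part is (c), and I would approach it by computing the induced probability exactly. Conditioned on the value $j$ sampled in \cref{line:new}, \cref{thm:wilson} applied to $\graph'$ says the walk returns $\tree'\defeq\tree\setminus\{\edge{\root}{j}\}$ with probability $\weight{\tree'}\big/\Z'_{j}$, where $\Z'_{j}\defeq\sum_{\tree''\in\treesg{\graph'}}\weight{\tree''}$ depends on $j$, and $\weight{\tree'}=\weight{\tree}\big/\wedge{\root}{j}$; since $j$ itself is drawn with probability $\wedge{\root}{j}\big/\sum_{j'\in\nodes}\wedge{\root}{j'}$, multiplying gives
\[
\prob(\tree)\;=\;\frac{\wedge{\root}{j}}{\sum_{j'\in\nodes}\wedge{\root}{j'}}\cdot\frac{\weight{\tree}\,/\,\wedge{\root}{j}}{\Z'_{j}}\;=\;\frac{\weight{\tree}}{\Big(\sum_{j'\in\nodes}\wedge{\root}{j'}\Big)\,\Z'_{j}},
\]
where $j$ is the head of the unique root edge of $\tree$. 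A bijection between $\treesg{\graph'}$ and the dependency trees of $\graph$ that use $\edge{\root}{j}$ gives $\Z'_{j}=\big(\sum_{\tree\in\dtrees,\,\edge{\root}{j}\in\tree}\weight{\tree}\big)\big/\wedge{\root}{j}$, so the displayed probability equals $\weight{\tree}/\Z$ for all $\tree\in\dtrees$ precisely when $\marginal{\root}{j}=\wedge{\root}{j}\big/\sum_{j'\in\nodes}\wedge{\root}{j'}$ for every root edge --- that is, when the true marginal over the root edge coincides with its locally normalised weight --- which need not hold.

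Finally I would make the failure concrete with the counterexample of \citet{stanojevic2022}: take $\nodes=\{\root,1,2\}$ with edges $\edge{\root}{1},\edge{\root}{2},\edge{1}{2},\edge{2}{1}$ and weights $\wedge{\root}{1}=\wedge{\root}{2}=\wedge{2}{1}=1$, $\wedge{1}{2}=2$. The only dependency trees are $\tree_1=\{\edge{\root}{1},\edge{1}{2}\}$ of weight $2$ and $\tree_2=\{\edge{\root}{2},\edge{2}{1}\}$ of weight $1$, so the target assigns $\prob(\tree_1)=\tfrac{2}{3}$; but $\wilsonRCAlg$ draws $j=1$ with probability $\tfrac12$, and then, the unique spanning tree of the induced graph on $\{1,2\}$ rooted at $1$ being $\{\edge{1}{2}\}$, it completes deterministically to $\tree_1$, so it assigns $\prob(\tree_1)=\tfrac12\neq\tfrac23$. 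The only real subtlety --- and the step I would be most careful about --- is in part (a): one must verify that the inner walk of $\wilsonRCAlg$, which reuses the $\graph$-normalised weights restricted to non-root targets rather than renormalising over $\graph'$, nonetheless samples from exactly the loop-erased-walk distribution of $\wilsonAlg(\graph')$, so that \cref{thm:wilson} is applicable; once that is in place, the bias in (c) is exactly the observation that the $j$-dependent normaliser $\Z'_{j}$ fails to cancel against the locally normalised root weight.
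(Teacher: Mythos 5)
Your proposal is correct and follows essentially the same route as the paper: reduce the inner loops to $\wilsonAlg$ on the graph $\graph'$ rooted at $j$ and invoke \cref{thm:wilson} for validity and the $\bigo{\hittingTime}$ bound, then exhibit bias by noting the root edge is drawn with its locally normalised weight rather than its true marginal. You are in fact somewhat more careful than the paper --- checking that the non-renormalised step weights still realise Wilson's walk on $\graph'$, deriving the exact induced distribution $\weight{\tree}/\bigl(\sum_{j'}\wedge{\root}{j'}\,\Z'_j\bigr)$, and giving a fully explicit weighted counterexample where the paper's is only schematic --- but the underlying argument is the same.
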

\begin{proof}
We note that executing lines  \cref{line:start} to \cref{line:end} is equivalent to running $\wilsonAlg$ on the graph $\graph'$ that is rooted at $j$ (and does not have $\root$).
By \cref{thm:wilson}, this results in a tree $\tree'\in\treesg{\graph'}$.
As we then sample an edge emanating from the root, we sample a valid dependency tree $\tree$.
We show that $\tree$ is sampled from a biased distribution through an example.
Consider a graph with two edges emanating from the root, both with weight $\frac{1}{2}$.
Suppose that there exist two dependency trees using one of the edges, and only one dependency tree using the other edge.
Then we should sample the root edges with their \emph{marginal} probabilities $\frac{2}{3}$ and $\frac{1}{3}$ respectively.
However, we sample the root edges by their edge weight $\frac{1}{2}$.
Therefore, $\tree$ is sampled from a bias distribution.

Furthermore, as \cref{line:new} of $\wilsonRCAlg$ takes $\bigo{\nN}$ which is less than $\bigo{\hittingTime}$, by \cref{thm:wilson}, $\wilsonRCAlg$ has a runtime of $\bigo{\hittingTime}$.
\end{proof}

\tikzset{cross/.style={cross out, draw=black, minimum size=2*(#1-\pgflinewidth), inner sep=0pt, outer sep=0pt}, cross/.default={2pt}}

\tagmcbegin{tag=Figure,alttext={The image contains five directed graphs. All five graphs have five nodes, labeled 1, 2, 3, 4, and root. The first graph is a fully connected graph where all incoming edges to node 1 are highlighted and the edge from node 4 to 1 is marked as edge e1. The second graph only has e1 directed to node 1 and all edges to node 2 are highlighted with the edge from the root to node 2 marked as e2. The third graph only has e1 directed to node 1 and e2 directed to node 2. All edges to node 3 are highlighted with the edge from node 2 to 3 marked as e3. The fourth graph only has e1, e2, and e3 directed to nodes 1 to 3 respectively, and all edges to node 4 are highlighted, with the edge from node 2 to 4 marked as e4. The final graph only contains the edges e1, e2, e3, and e4.}}
\begin{figure*}[t!]
    \centering
    \begin{subfigure}[b]{.19\linewidth}
\centering
\begin{tikzpicture}
\begin{scope}[every node/.style={circle,thick,draw, inner sep=2pt}]
    \node(r) at (0, 0) {$\root$};
    \node (1) at (-1, 1) {$1$};
    \node (2) at (1, 1) {$2$};
    \node (3) at (1, -1) {$3$};
    \node (4) at (-1, -1) {$4$};
\end{scope}
\begin{scope}[>=latex,
              every node/.style={fill=white, inner sep=0.5pt},
              every edge/.style={draw, thick}]
    \path [->] (r) edge[dashed] (1);
    \path [->] (r) edge[] (2);
    \path [->] (r) edge[] (3);
    \path [->] (r) edge[] (4);
    
    \path [->] (1) edge[] (2);
    \path [->] (1) edge[bend left=30] (3);
    \path [->] (1) edge[bend left=15] (4); 
    
    \path [->] (2) edge[dashed, bend left=15] (1);
    \path [->] (2) edge[] (3);
    \path [->] (2) edge[bend left=30] (4);
    
    \path [->] (3) edge[dashed, bend left=30] (1);
    \path [->] (3) edge[bend left=15] (2);
    \path [->] (3) edge[] (4); 
    
    \path [->] (4) edge[dashed] node {\small $e_1$} (1);
    \path [->] (4) edge[bend left=30] (2);
    \path [->] (4) edge[bend left=15] (3); 
\end{scope}
\end{tikzpicture}
\caption{$\graph$}
\label{subfig:a}
\end{subfigure}
\begin{subfigure}[b]{.19\linewidth}
\centering
\begin{tikzpicture}
\begin{scope}[every node/.style={circle,thick,draw, inner sep=2pt}]
    \node(r) at (0, 0) {$\root$};
    \node (1) at (-1, 1) {$1$};
    \node (2) at (1, 1) {$2$};
    \node (3) at (1, -1) {$3$};
    \node (4) at (-1, -1) {$4$};
\end{scope}
\begin{scope}[>=latex,
              every node/.style={fill=white, inner sep=0.5pt},
              every edge/.style={draw, thick}]
    \path [->] (r) edge[dashed] node {\small $e_2$} (2);
    \path [->] (r) edge[] (3);
    \path [->] (r) edge[] (4);
    
    \path [->] (1) edge[dashed] (2);
    \path [->] (1) edge[bend left=30] (3);
    \path [->] (1) edge[bend left=15] (4); 
    
    \path [->] (2) edge[] (3);
    \path [->] (2) edge[bend left=30] (4);
    
    \path [->] (3) edge[dashed, bend left=15] (2);
    \path [->] (3) edge[] (4); 
    
    \path [->] (4) edge[] node {\small $e_1$} (1);
    \path [->] (4) edge[dashed, bend left=30] (2);
    \path [->] (4) edge[bend left=15] (3); 
\end{scope}
\end{tikzpicture}
\caption{$\treeinclude{\graph}{\forestn{2}}$}
\label{subfig:b}
\end{subfigure}
\begin{subfigure}[b]{.19\linewidth}
\centering
\begin{tikzpicture}
\begin{scope}[every node/.style={circle,thick,draw, inner sep=2pt}]
    \node(r) at (0, 0) {$\root$};
    \node (1) at (-1, 1) {$1$};
    \node (2) at (1, 1) {$2$};
    \node (3) at (1, -1) {$3$};
    \node (4) at (-1, -1) {$4$};
\end{scope}
\begin{scope}[>=latex,
              every node/.style={fill=white, inner sep=0.5pt},
              every edge/.style={draw, thick}]
    \path [->] (r) edge[] node {\small $e_2$} (2);
    \path [->] (r) edge[dashed] (3);
    \path [->] (r) edge[] (4);
    
    \path [->] (1) edge[dashed, bend left=30] (3);
    \path [->] (1) edge[bend left=15] (4); 
    
    \path [->] (2) edge[dashed] node {\small $e_3$} (3);
    \path [->] (2) edge[bend left=30] (4);
    
    \path [->] (3) edge[] (4); 
    
    \path [->] (4) edge[] node {\small $e_1$} (1);
    \path [->] (4) edge[dashed, bend left=15] (3); 
\end{scope}
\end{tikzpicture}
\caption{$\treeinclude{\graph}{\forestn{3}}$}
\label{subfig:c}
\end{subfigure}
\begin{subfigure}[b]{.19\linewidth}
\centering
\begin{tikzpicture}
\begin{scope}[every node/.style={circle,thick,draw, inner sep=2pt}]
    \node(r) at (0, 0) {$\root$};
    \node (1) at (-1, 1) {$1$};
    \node (2) at (1, 1) {$2$};
    \node (3) at (1, -1) {$3$};
    \node (4) at (-1, -1) {$4$};
\end{scope}
\begin{scope}[>=latex,
              every node/.style={fill=white, inner sep=0.5pt},
              every edge/.style={draw, thick}]
    \path [->] (r) edge[] node {\small $e_2$} (2);
    \path [->] (r) edge[dashed] (4);
    
    \path [->] (1) edge[dashed, bend left=15] (4); 
    
    \path [->] (2) edge[] node {\small $e_3$} (3);
    \path [->] (2) edge[dashed, bend left=30] node {\small $e_4$} (4);
    
    \path [->] (3) edge[dashed] (4); 
    
    \path [->] (4) edge[] node {\small $e_1$} (1);
\end{scope}
\end{tikzpicture}
\caption{$\treeinclude{\graph}{\forestn{4}}$}
\label{subfig:d}
\end{subfigure}
\begin{subfigure}[b]{.19\linewidth}
\centering
\begin{tikzpicture}
\begin{scope}[every node/.style={circle,thick,draw, inner sep=2pt}]
    \node(r) at (0, 0) {$\root$};
    \node (1) at (-1, 1) {$1$};
    \node (2) at (1, 1) {$2$};
    \node (3) at (1, -1) {$3$};
    \node (4) at (-1, -1) {$4$};
\end{scope}
\begin{scope}[>=latex,
              every node/.style={fill=white, inner sep=0.5pt},
              every edge/.style={draw, thick}]
    \path [->] (r) edge[] node {\small $e_2$} (2);
    
    
    \path [->] (2) edge[] node {\small $e_3$} (3);
    \path [->] (2) edge[bend left=30] node {\small $e_4$} (4);
    
    
    \path [->] (4) edge[] node {\small $e_1$} (1);
\end{scope}
\end{tikzpicture}
\caption{$\treeinclude{\graph}{\tree}$}
\label{subfig:e}
\end{subfigure}
    \caption{
    Consider sampling a tree from the fully connected graph $\graph$ given in \emph{(a)}.
    We do this by sampling an incoming edge to each non-root node.
    We first sample an incoming edge to $\nodeId{1}$, the possible edges are dashed in \emph{(a)}.
    Suppose we sample $e_1$ with probability $\prob(e_1)$, then we have $\forestn{2}=\{e_1\}$.
    If we include $e_1$ in our graph as in \emph{(b)}, and repeat the process, we will sample edge $e_2$ with probability $\prob(e_2 \mid \forestn{2})$.
    We now have $\forestn{3} = \{e_1, e_2\}$, and we can sample an incoming edge $e_3$ to $\nodeId{3}$ with probability $\prob(e_3 \mid \forestn{3})$ as in \emph{(c)}.
    We can similarly find $\forestn{4}$ in \emph{d}.
    Finally, in \emph{(e)}, we have $\forestn{5}=\tree=\{e_1, e_2, e_3, e_4\}$, which is a tree in $\treesg{\graph}$.
    Note $\treesg{\treeinclude{\graph}{\tree}}=\{\tree\}$.
    }
    \label{fig:example}
\end{figure*}
\tagmcend

\section{Ancestral Sampling}\label{sec:colbourn}
In this section, we present an extension to the ancestral sampling algorithm of \citet{colbourn96} to the weighted graph case.
This algorithm relies on the efficient computation of $\Z$ using the MTT \citep{kirchhoff, tutte1984graph},
allows us to compute $\Z$ in $\bigo{\nN^3}$ by taking the determinant of the Laplacian matrix, $\lap\in\real^{\nN\times\nN}$.
We use \citet{koo-et-al-2007}'s adaptation of the MTT to dependency trees.\footnote{\citet{koo-et-al-2007}'s adaptation constructs the Laplacian matrix without considering edges emanating from the root.
They then arbitrarily replace the first row of the Laplacian matrix with the root edge weights.
One can see that the first row is chosen for convenience by examining the proof of Proposition 1 in \citet{koo-et-al-2007}.
Indeed, the desired Laplacian can be obtained by replacing any row by the root edge weights.
}
\newcommand{\koocite}[0]{\citet[p. 140]{tutte1984graph}}
\begin{thm}[Proposition 1, \newcite{koo-et-al-2007}]
\label{thm:mtt}
For any graph $\graph$, the normalization constant $\Z$ over the distribution of dependency trees $\dtrees$ is given by $\Z = \abs{\lap}$ where
\begin{equation}\label{eq:lap}
    \lapelem{ij} =
    \begin{cases}
    \wedge{\root}{j} & \emph{\textbf{ if }} i=1 \\
    \ \ \ \ \ \ \monstersum{i'\in\nodes\setminus\{\root,i\}}\ \wedge{i'}{j} & \emph{\textbf{ if }} i=j \\
    -\wedgeij & \emph{\textbf{ otherwise}}
    \end{cases}
\end{equation}
\end{thm}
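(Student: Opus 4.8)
The plan is to reduce the claim to the classical weighted directed matrix--tree theorem \citep[p.~140]{tutte1984graph} applied to the graph with the root deleted, and then to handle the single-root-edge constraint by a cofactor identity. Let $\graph'$ be the graph on the non-root nodes $\{1,\dots,\nN\}$ that keeps exactly the edges $\bedge{i}{j}$ of $\graph$ with $i,j\neq\root$, and let $L'\in\real^{\nN\times\nN}$ be its in-degree Laplacian, i.e.\ $L'_{jj}=\sum_{i'\in\nodes\setminus\{\root,j\}}\wedge{i'}{j}$ and $L'_{ij}=-\wedge{i}{j}$ for $i\neq j$. The key observation is that $L'$ is exactly the matrix of \cref{eq:lap} \emph{before} its first row is overwritten; equivalently, $\lap$ is $L'$ with row $1$ replaced by the vector $\mathbf{b}=(\wedge{\root}{1},\dots,\wedge{\root}{\nN})$.

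I would first record two ingredients. (i) By the directed matrix--tree theorem, with the orientation convention counting out-arborescences (trees whose edges point \emph{away} from the root, which is what dependency trees are — the convention can be pinned down on a two-node example), the principal minor $\det(L'^{[j]})$ obtained by deleting row $j$ and column $j$ of $L'$ equals $\sum_{\tree'} w(\tree')$ over all spanning out-arborescences $\tree'$ of $\graph'$ rooted at $j$. (ii) Every column of $L'$ sums to zero, because its diagonal entry is the total weight of the edges entering that node from non-root nodes, which cancels the off-diagonal entries of that column; hence $\mathbf{1}^{\!\top}L'=\mathbf{0}$.

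Next comes the combinatorial decomposition. Every $\tree\in\dtrees$ has a unique root edge $\bedge{\root}{j}$; deleting it leaves $\nN-1$ edges among $\{1,\dots,\nN\}$ in which $j$ has in-degree $0$, every other node has in-degree $1$, and there is no cycle — that is, a spanning out-arborescence of $\graph'$ rooted at $j$ — and this map is a weight-multiplicative bijection between $\{\tree\in\dtrees:\bedge{\root}{j}\in\tree\}$ and the arborescences in (i). Summing over $j$ and using (i) gives $\Z=\sum_{j=1}^{\nN}\wedge{\root}{j}\det(L'^{[j]})$.

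It then remains to identify this sum with $\det(\lap)$. Expanding $\det(\lap)$ along its first row yields $\det(\lap)=\sum_{j=1}^{\nN}\wedge{\root}{j}\,C_{1j}(L')$, where $C_{1j}(L')=(-1)^{1+j}$ times the minor of $L'$ with row $1$ and column $j$ removed. From $\mathbf{1}^{\!\top}L'=\mathbf{0}$ we get $\mathrm{adj}(L')\,L'=\det(L')\,\eye=\mathbf{0}$, so each row of $\mathrm{adj}(L')$ lies in the left null space of $L'$, which forces the cofactor $C_{ij}(L')$ to be independent of the deleted row index $i$ (and if $\mathrm{rank}\,L'<\nN-1$ all such cofactors vanish, so the claim holds vacuously). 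In particular $C_{1j}(L')=C_{jj}(L')=\det(L'^{[j]})$, and therefore $\det(\lap)=\Z$. The main obstacle is precisely this last step — keeping the signs straight in the row-replacement expansion and justifying $C_{1j}(L')=C_{jj}(L')$ (an elementary alternative to the adjugate argument: add rows $2,\dots,\nN$ except row $j$ to row $1$, use $\mathbf{1}^{\!\top}L'=\mathbf{0}$ to turn row $1$ into $-(\text{row }j)$, and track the sign of the resulting row permutation). Everything else is either the cited matrix--tree theorem or the routine bijection above.
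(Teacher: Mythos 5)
Your proof is correct. Note that the paper itself offers no proof of this statement---it is imported verbatim as Proposition~1 of \citet{koo-et-al-2007}---so there is nothing internal to compare against; your argument (delete the root, apply the directed matrix--tree theorem to the reduced Laplacian $L'$, decompose $\Z$ by the unique root edge, expand $\det(\lap)$ along the replaced first row, and use $\mathbf{1}^{\top}L'=\mathbf{0}$ to show the cofactors $C_{ij}(L')$ are constant in $i$) is a faithful, self-contained reconstruction of the standard proof in that reference. The cofactor-independence step you prove is exactly the fact the paper's own footnote gestures at when it remarks that \emph{any} row of $L'$ could have been replaced by the root weights.
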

\noindent We present the algorithm using the above Laplacian matrix to sample dependency trees rather than spanning trees.
However, one can easily modify this algorithm to sample spanning trees.\footnote{\label{footnote:lap}The Laplacian matrix for spanning trees is given by $$\lapelem{ij}=\begin{cases}
   \sum_{i'\in\nodes\setminus\{i\}}\ \wedge{i'}{j} & \textbf{ if } i=j \\
    -\wedgeij & \textbf{ otherwise}
    \end{cases}$$}
The premise of the algorithm is that we iteratively sample an incoming edge to a non-root node of the graph until we have a tree.
Without loss of generality, we can enumerate the edges of any sampled tree $\tree$ as $e_1$ to $e_\nN$.
Therefore, at time step $n$ of our sampling algorithm, we will have a subset of our tree
\begin{equation}\label{eq:subtree}
    \forestn{n} = [e_{1}, \dots , e_{n-1}]
\end{equation}
Note that $\forestn{1}=\emptyset$ and $\forestn{\nN+1}=\tree$.
We can then express the probability of a tree as
\begin{equation}\label{eq:cond-prob}
    \prob(\tree)=\prod_{n=1}^{\nN}\prob(e_n \mid \tree_{< n})
\end{equation}

We sample the first edge $e_1$ with probability $\prob(e_1)$.
We can find its marginal probability by taking the derivative of the log partition function $\log \Z$ in \cref{thm:mtt} which \citet{koo-et-al-2007} show to be\footnote{\label{footnote:marginal}For spanning trees, the marginal probability can be similarly derived as $\marginal{i}{j}=\wedgeij\left(\cachedelem{jj} - \truth{i\neq\root}\cachedelem{ij}\right)$ where $\cached$ is the transpose of the inverse of the Laplacian matrix in \cref{footnote:lap}.}
\begin{align}\label{eq:marginal}
    &\marginal{i}{j} \\ & = \begin{cases}
    \wedge{\root}{j}\cachedelem{1 j} & \textbf{ if } i=\root \\
    \wedgeij\left(\truth{j\neq 1}\cachedelem{jj} - \truth{i\neq 1}\cachedelem{ij}\right)  & \textbf{ otherwise}
    \end{cases} \nonumber
\end{align}
where $\cached=\lap^{-\top}$ (the transpose of the inverse of $\lap$) and $\truth{x}=1\iff x$ is true, otherwise, $\truth{x}=0$.
Therefore, after computing $\cached$ once, we can compute each $\marginal{i}{j}$ in $\bigo{1}$ time.
Finding $\cached$ requires us to take a matrix inverse, and so runs in $\bigo{\nN^3}$ time.\footnote{This runtime is also true by automatic differentiation \citep{griewank-walther} as finding $\Z$ takes $\bigo{\nN^3}$ time.}

Each subsequent edge that we sample must be conditioned by all previously sampled edges ($\forestn{n}$).
At the $n\th$ step, we have sampled $\forestn{n}$, and so our final sampled tree $\tree$ will be such that $\forestn{n}\subseteq\tree$.
Therefore, sampling $e_n$ from $\graph$ is equivalent to sampling $e_n$ from the subgraph $\treeinclude{\graph}{\forestn{n}}$, which is defined as the largest graph such that $\tree\in\treesg{\treeinclude{\graph}{\forestn{n}}}\implies\forestn{n}\subseteq\tree$.
Consequently, if $\edgeij\in\forestn{n}$, then $\treeinclude{\graph}{\forestn{n}}$ does not contain any other incoming edges to node $j$ other than $\edgeij$.\footnote{When sampling dependency trees, one would think that if $\edge{\root}{j}\in\forestn{n}$, we would need to remove all outgoing edges from the root. However, by the construction of our Laplacian, $\Z$ only accounts for dependency trees and so the marginals already enforce this restriction. Therefore, we only need to remove all other incoming edges to $j$.}

A correct ancestral sampling algorithm will sample an edge $e$ for each non-root node from the graph using \cref{eq:marginal}, and then update graph to be $\treeinclude{\graph}{e}$ and repeat.
This algorithm will have to recompute $\cached$ $\bigo{\nN}$ times and so will have a runtime of $\bigo{\nN^4}$.
We show a graphical example of the algorithm in \cref{fig:example}.
We give pseudocode for this as $\colbournAlg$ in \cref{alg:colbourn}.
The function $\sampleEdge$ samples from the distribution defined in \cref{eq:marginal} and the function $\condition$ updates $\cached$ to contain the transpose of the Laplacian inverse of the conditioned graph.
We describe an efficient procedure for this conditioning step in the following section.

\subsection{Efficiently Computing Marginals}\label{sec:fast-marginals}
\citet{colbourn96} show that we can update the marginals in $\bigo{\nN^2}$ rather than $\bigo{\nN^3}$ by using rank-one updates on $\lap$.
Namely, \citet{colbourn96}'s adds an outer-product $\vu\vv^\top$ to $\lap$ for some column vectors $\vu,\vv\in\real^{\nN}$ for each conditioning operation.
We extend this to the weighted Laplacian for dependency trees.

\begin{figure}
    \centering

\begin{flushleft}
\hspace{0.5em}
\textbf{Global Variables}: \globalvariable{$\lap$}, \globalvariable{$\cached$}
\end{flushleft}
\begin{algorithmic}[1]
    \Func{$\algCall{\colbournAlg}{\graph}$}
    \LinesComment{Sample a dependency tree from a graph $\graph$; requires $\bigo{\nN^3}$ time, $\bigo{\nN^2}$ space.}
    \State $\globalvariable{\lap} \gets \lapAlgCall{\graph}$ 
    \State $\globalvariable{\cached} \gets \globalvariable{\lap}^{-\top}$ \Comment{$\bigo{\nN^3}$}
    \State $\tree \gets [\, ]$
    \For{ $j\in\nodes\setminus\{\root\}$} 
        \State $e \gets \sampleEdgeCall{j}$
        \State $\tree.\mathrm{append}(e)$ 
        \State $\conditionCall{e}$ 
    \EndFor
    \State \Return $\tree$
    \EndFunc
    
    \Func{$\lapAlgCall{G}$}
        \LinesComment{Construct the Laplacian of \citet{koo-et-al-2007} for dependency trees as in \cref{eq:lap}; requires $\bigo{\nN^2}$ time, $\bigo{\nN^2}$ space.}
        \State $\lap \gets \zerovector$
        \For{$j\in\nodes\setminus\{\root\}$}
            \For{$i\in\nodes\setminus\{\root, j\}$}
                \State $\lapelem{ij} \gets - \wedgeij$
                \State $\lapelem{jj} \plusequal \wedgeij$
            \EndFor
            \State $\lapelem{1j} \gets \wedge{\root}{j}$ \label{line:lap}
            \LinesComment{For spanning trees, we can construct the Laplacian in \cref{footnote:lap} by replacing \cref{line:lap} with $\lapelem{jj}\plusequal\wedge{\root}{j}$.}
        \EndFor
        \State \Return $\lap$
    \EndFunc
    
    \Func{$\sampleEdgeCall{j}$}
    \LinesComment{Sample an incoming edge to $j$ using global variable $\cached$ as in \cref{eq:marginal}; requires $\bigo{\nN}$ time, $\bigo{\nN}$ space.}
    \State $\mat{m} \gets \zerovector$
    \State $\matrixelem{m}{\root} \plusequal \wedge{\root}{j}\globalvariable{\cachedelem{1j}}$ \label{line:root}

    \For{$i\in\nodes\setminus\{\root, j\}$} 
    \State $\matrixelem{m}{i} \plusequal \wedge{i}{j}\left(\delta_{j\neq 1}\globalvariable{\cachedelem{jj}} - \delta_{i\neq 1}\globalvariable{\cachedelem{ij}}\right)$ \label{line:marginal}
    \EndFor
    \LinesComment{For spanning trees, we can construct the marginals in \cref{footnote:marginal} by replacing \cref{line:root} with $\matrixelem{m}{\root}\plusequal\wedge{\root}{j}\cachedelem{jj}$ and \cref{line:marginal} with $\matrixelem{m}{i}\plusequal\wedge{i}{j}(\cachedelem{jj} - \cachedelem{ij})$.}
    \State \Return sample from $\mat{m}$ 
    \EndFunc
    
    \Func{$\conditionCall{e}$}
    \LinesComment{Condition the Laplacian and the transpose of its inverse to always include $e$ in any tree; requires $\bigo{\nN^2}$ time, $\bigo{\nN^2}$ space.}
    \State Let $e=\edgeij$
    \If{$i=\root$}
    \State $\vu \gets \wedge{\root}{j}\onehot{1}  - \globalvariable{\lapelem{[:, j]}}$
    \Else:
    \State $\vu \gets \wedgeij (\truth{j\neq 1}\onehot{j} - \truth{i\neq 1}\onehot{i}) - \globalvariable{\lapelem{[:, j]}}$ 
    \EndIf
    \State $\globalvariable{\lapelem{[:, j]}} \plusequal \vu $
    \State $\globalvariable{\cached} \minusequal {(\globalvariable{\cachedelem{[j, :]}}\vu^\top\globalvariable{\cached)}}\ /\ {(1 + \vu^\top\globalvariable{\cachedelem{[j, :]})}}$ 
    \EndFunc
\end{algorithmic}

    \caption{Algorithm for sampling dependency trees using the method of \citet{colbourn96}.
    We describe the changes required to sample spanning trees in the comments.
    }
    \label{alg:colbourn}
\end{figure}

\begin{lemma}\label{lemma:lape}
For any graph $\graph$ with Laplacian $\lap$ and any edge $e=\edgeij\in\edges$, the $j\th$ column of the Laplacian $\lape$ of $\treeinclude{\graph}{e}$ is given by
\begin{equation}
    \lapeelem{[:,j]} = \begin{cases}
    \wedge{\root}{j}\onehot{1} & \textbf{ \emph{if} } i=\root \\
    \wedge{i}{j}(\truth{j\neq 1}\onehot{j} - \truth{i\neq 1}\onehot{i})
    & \textbf{ \emph{otherwise}}  \\
    \end{cases}
\end{equation}
where
$\onehot{j}$ is the one-hot vector such that the $j\th$ element is $1$.
Furthermore, the $k\th$ column $\lape$, where $k\neq j$, is equivalent to the $k\th$ column of $\lap$.
\end{lemma}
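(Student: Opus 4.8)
The plan is to unfold the definition of $\treeinclude{\graph}{e}$ at the level of edge weights and then read each column of $\lape$ directly off \cref{eq:lap}. Recall from the discussion preceding the lemma that conditioning on $e = \edgeij$ yields the graph that keeps every edge of $\graph$ except that it deletes all incoming edges to $j$ other than $e$ itself, and changes nothing else (in particular no outgoing root edges are removed). Writing $w^e$ for the weights of $\treeinclude{\graph}{e}$, this says $w^e_{\bedge{i'}{j}} = 0$ for every $i' \ne i$, $w^e_{\bedge{i}{j}} = \wedge{i}{j}$, and $w^e_{\bedge{k'}{k}} = \wedge{k'}{k}$ whenever $k \ne j$.

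First I would dispatch the columns $k \ne j$. Inspecting \cref{eq:lap}, the entry $\lapelem{mk}$ is a function only of the weights $\{w_{\bedge{m'}{k}}\}_{m'}$ of edges incoming to $k$, and none of these change in passing from $\graph$ to $\treeinclude{\graph}{e}$; hence the $k\th$ column of $\lape$ coincides with the $k\th$ column of $\lap$.

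The substantive part is column $j$. Here I substitute the collapsed weights $w^e$ into \cref{eq:lap} and evaluate $\lapeelem{mj}$ for each $m \in \{1, \dots, \nN\}$, splitting on the two cases. If $i = \root$, the only surviving incoming weight at $j$ is $w^e_{\bedge{\root}{j}} = \wedge{\root}{j}$: row $m=1$ gives $\wedge{\root}{j}$, the diagonal row $m=j$ (for $j \ne 1$) gives $\sum_{i' \in \nodes \setminus\{\root,j\}} w^e_{\bedge{i'}{j}} = 0$, and every remaining row gives $-w^e_{\bedge{m}{j}} = 0$, i.e. the column is $\wedge{\root}{j}\onehot{1}$. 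If $i \ne \root$, then $w^e_{\bedge{\root}{j}} = 0$ and only $w^e_{\bedge{i}{j}} = \wedge{i}{j}$ survives: row $m=1$ gives $0$, the diagonal row $m=j$ (for $j\ne 1$) gives $\wedge{i}{j}$, the off-diagonal row $m=i$ (for $i\ne 1$) gives $-\wedge{i}{j}$, and all other rows give $0$, i.e. $\wedge{i}{j}(\onehot{j} - \onehot{i})$.

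The one place that needs care—and the only real obstacle—is that \cref{eq:lap} gives row $1$ a special meaning (it always stores the root weights, even on the diagonal), so the clean one-hot formulas of the statement only emerge once the boundary subcases $j=1$ and $i=1$ are reconciled with the indicators $\truth{j\ne 1}$ and $\truth{i\ne 1}$. I would check these explicitly: when $j=1$ the diagonal row is row $1$ and the $i=1$ branch of \cref{eq:lap} takes precedence, so its value is $w^e_{\bedge{\root}{j}}$, which equals $\wedge{\root}{j}$ in the $i=\root$ case (consistent with $\wedge{\root}{j}\onehot{1}$) and $0$ in the $i\ne\root$ case (consistent with the factor $\truth{j\ne1}$ killing the $\onehot{j}$ term); symmetrically, when $i=1$ the row $i$ is row $1$, forcing the $\truth{i\ne1}$ term to drop. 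With those subcases verified the two displayed expressions hold verbatim, and there is no analytic content beyond this bookkeeping.
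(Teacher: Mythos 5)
Your proof is correct and takes essentially the same route as the paper's: a case split on columns $k\neq j$ versus $k=j$, then on $i=\root$ versus $i\neq\root$, reading each entry off \cref{eq:lap} after zeroing the non-selected incoming weights at $j$. If anything you are more careful than the paper, which handles the $i=1$ and $j=1$ boundary rows only implicitly (its ``as long as $i\neq 0$'' / ``$j\neq 0$'' conditions appear to be typos for $\neq 1$), whereas you verify explicitly that the indicators $\truth{i\neq1}$ and $\truth{j\neq1}$ arise from row $1$ storing the root weights.
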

\begin{proof}
Consider the column $\lapeelem{[:,k]}$.

\begin{enumerate}[itemsep=0pt, topsep=0pt, leftmargin=10pt]
    \item[] \case{Case $k=j$} Then the only incoming edge to $j$ in $\treeinclude{\graph}{e}$ is $\edge{i}{j}$.
    \begin{enumerate}[itemsep=0pt, topsep=0pt, leftmargin=10pt]
        \item[] \case{Case $i=\root$} Then element $\lapeelem{1j}=\wedge{\root}{j}$ by \citet{koo-et-al-2007}'s construction.
        As there are no other incoming edges to $kj$, the remainder of the column is filled with zeros ans so $\lapeelem{[:,j]}=\wedge{\root}{j}\onehot{1}$.
        
        \item[] \case{Case $i\neq\root$} Then $\lapeelem{ij}=-\wedgeij$ as long as $i\neq 0$. Since $i\neq\root$, we also have that $\lapeelem{jj}=\wedgeij$ as long as $j\neq 0$. Therefore, we can represent the $j\th$ column by $\wedge{i}{j}(\truth{j\neq 1}\onehot{j} - \truth{i\neq 1}\onehot{i})$.
    \end{enumerate}
    
    \item[] \case{Case $k\neq j$}
    Then all incoming edges to node $k$ are still in $\treeinclude{\graph}{e}$ and so $\lapeelem{[:,k]}=\lapelem{[:,k]}$.
\end{enumerate}

\end{proof}

\cref{lemma:lape} shows that conditioning by an edge $\edgeij$ is equivalent to a column replacement for $\lap$.
A column replacement is equivalent to a rank-one update where $\vv=\onehot{j}$ and $\vu=\wedgeij(\onehot{j} - \onehot{i}) - \lapelem{[:,j]}$ and $\lapelem{[:,j]}$ is the $j\th$ column of $\lap$.
Performing such a rank-one update speeds up the conditioning of $\lap$ from $\bigo{\nN^2}$ to $\bigo{\nN}$.
More importantly, it lets us update $\cached$ in $\bigo{\nN^2}$ using the Sherman--Morrison formula \citep{sherman1950adjustment}, which states that for any matrix $\mat{A}\in\real^{\nN\times\nN}$ and column vectors $\vu,\vv\in\real^{\nN}$\footnote{The Sherman--Morrison formula can be computed in $\bigo{\nN^2}$ due to the associativity of matrix multiplication.}
\begin{equation}
    \inv{(\mat{A}+\vu\vv^\top)} = \inv{\mat{A}} - \mat{T}
\end{equation}
where
\begin{equation}
    \mat{T} = \frac{\inv{\mat{A}}\vu\,\vv^{\top}\inv{\mat{A}}}{1+\vv^{\top}\inv{\mat{A}}\vu}
\end{equation}

Recalling that $\cached$ requires the inverse transpose, and our choice of $\vv=\onehot{j}$, 
we can simplify the expression for $\mat{T}^\top$ to be
\begin{subequations}
\begin{align}
    \mat{T}^\top &= \left(\frac{\inv{\lap}\vu\,\onehot{j}^{\top}\inv{\lap}}{1+\onehot{j}^{\top}\inv{\lap}\vu}\right)^{\top} \\
    &= \left(\frac{\inv{\lap}\vu\,\inv{\lap}_{[j,:]}}{1+\inv{\lap}_{[j,:]}\vu}\right)^{\top} \\
    &= \frac{\lap^{-\top}_{[j,:]}\vu^{\top}\lap^{-\top}}{1+\vu^\top \lap^{-\top}_{[j,:]}} \\
    &= \frac{\cachedelem{[j,:]}\vu^{\top}\cached}{1+\vu^\top\cachedelem{[j,:]}}
\end{align}
\end{subequations}
\noindent Therefore, we can update $\cached$ in $\bigo{\nN^2}$.
We give pseudocode for this efficient update as $\condition$ in \cref{alg:colbourn}.

\begin{thm}\label{thm:colbourn}
For any graph $\graph$, $\algCall{\colbournAlg}{\graph}$ samples a dependency tree with probability
\begin{equation}
    \prob(\tree)=\frac{1}{\Z}\prod_{\edgeij\in\tree}\wedgeij
\end{equation}
Furthermore, $\colbournAlg$ runs in $\bigo{\nN^3}$ time.
\end{thm}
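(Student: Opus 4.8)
The plan is to split the claim into two parts: correctness of the sampling distribution and the runtime bound. For correctness, I would argue by induction on the number of edges sampled so far, showing that after appending $e_1, \dots, e_{n-1}$ and conditioning, the global matrix $\cached$ correctly equals $\lap^{-\top}$ for the Laplacian of $\treeinclude{\graph}{\forestn{n}}$, so that the call to $\sampleEdge$ at step $n$ draws $e_n$ from the true conditional marginal $\prob(e_n \mid \forestn{n})$. The base case is the initialization $\cached \gets \lap^{-\top}$ for $\graph$ itself, which is exactly the marginal formula of \cref{eq:marginal} when $\forestn{1} = \emptyset$. For the inductive step, I would invoke \cref{lemma:lape}: conditioning on $e = \edgeij$ replaces column $j$ of $\lap$ and leaves all other columns fixed, hence is a rank-one update $\lap + \vu\vv^\top$ with $\vv = \onehot{j}$ and $\vu$ as defined in $\condition$; by the Sherman--Morrison derivation carried out just above the theorem, the update to $\cached$ in $\condition$ produces exactly $(\lape)^{-\top}$. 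One subtlety to flag explicitly: Sherman--Morrison requires $1 + \vv^\top \inv{\lap} \vu \neq 0$, which holds precisely because $\lape$ is the Laplacian of a graph that still has at least one dependency tree through $\forestn{n+1}$ (the sampled edge has positive marginal, so $\treeinclude{\graph}{\forestn{n+1}}$ is nonempty), so its determinant is nonzero and the update is well-defined.

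Having established that each $e_n$ is drawn with probability $\prob(e_n \mid \forestn{n})$, I would conclude by the chain rule \cref{eq:cond-prob}: the probability of returning a particular tree $\tree = [e_1, \dots, e_\nN]$ is $\prod_{n=1}^{\nN} \prob(e_n \mid \forestn{n}) = \prob(\tree) = \weight{\tree}/\Z = \frac{1}{\Z}\prod_{\edgeij\in\tree}\wedgeij$, which is the desired distribution. I should also note that the enumeration of $\tree$'s edges as $e_1, \dots, e_\nN$ is well-defined because the \textbf{for} loop visits the non-root nodes in a fixed order and each contributes exactly one incoming edge, and that the output is always a valid dependency tree since the Laplacian of \citet{koo-et-al-2007} only charges mass to dependency trees (so every positive-marginal edge keeps at least one such tree alive, and after $\nN$ steps $\treeinclude{\graph}{\tree}$ contains a unique tree, namely $\tree$).

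For the runtime, I would tally the costs line by line: $\lapAlg$ builds $\lap$ in $\bigo{\nN^2}$; the single inversion $\cached \gets \lap^{-\top}$ costs $\bigo{\nN^3}$; the loop runs $\nN$ times, each iteration calling $\sampleEdge$ (which fills an $\nN$-vector and samples, $\bigo{\nN}$) and $\condition$ (one rank-one update of $\lap$ in $\bigo{\nN}$ plus the Sherman--Morrison update of $\cached$, which is $\bigo{\nN^2}$ by associativity of the matrix--vector products). Summing gives $\bigo{\nN^3} + \nN \cdot \bigo{\nN^2} = \bigo{\nN^3}$.

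The main obstacle is the correctness argument, specifically verifying that the simplified $\mat{T}^\top$ expression used in $\condition$ genuinely coincides with $(\lap + \vu\onehot{j}^\top)^{-\top}$ applied to the \emph{current} (already-conditioned) matrix rather than the original $\lap$ — i.e.\ getting the induction hypothesis and the rank-one update to compose correctly at every step — together with the non-degeneracy condition for Sherman--Morrison. The runtime accounting is routine by comparison.
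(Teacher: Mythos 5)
Your proposal matches the paper's proof essentially step for step: the same induction on the number of conditioned edges showing $\cached$ equals the transposed Laplacian inverse of $\treeinclude{\graph}{\forestn{n}}$ (via \cref{lemma:lape} and Sherman--Morrison), the same chain-rule conclusion via \cref{eq:cond-prob}, and the same line-by-line runtime tally giving $\bigo{\nN^3}$. Your explicit check that $1 + \vv^\top\inv{\lap}\vu \neq 0$ (because the conditioned graph still supports at least one dependency tree) is a worthwhile detail the paper leaves implicit, but it does not change the argument.
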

\begin{proof}
The probability of a tree $\prob(\tree)$ can equivalently be written as the product of the conditional edge marginals as in \cref{eq:cond-prob}.
To prove correctness, we prove by induction that for all $n\le\nN$, at the $n\th$ call to $\sampleEdge$, $\cached$ contains the transpose of the Laplacian inverse of $\treeinclude{\graph}{\forestn{n}}$ and an edge $e$ with probability $\prob(e \mid \forestn{n})$.

\begin{enumerate}[itemsep=0pt, topsep=0pt, leftmargin=10pt]
\item[] \case{Base case} Then $n=1$ and $\forestn{1}=\emptyset$. $\cached$ contains the transpose of the Laplacian inverse of $\graph$ as expected and so $\sampleEdge$ will sample an edge $e$ with probability $\prob(e)$ as expected.

\item[] \case{Inductive step} Then $\forestn{n}=[e_1,\dots,e_{n-1}]$.
At the $(n-1)\th$ call to $\sampleEdge$, $\cached$ contains the transpose of the Laplacian inverse of $\treeinclude{\graph}{\forestn{n-1}}$.
We then call $\condition$, which by \cref{lemma:lape} and the Sherman--Morrison formula updates $\cached$ to be the transpose of the Laplacian inverse of $\treeinclude{\graph}{\forestn{n}}$.
Therefore, at the $n\th$ call to $\sampleEdge$, $\cached$ contains the correct values and so $\sampleEdge$ will sample an edge $e$ with probability $\prob(e \mid \forestn{n})$.
\end{enumerate}

Therefore, $\colbournAlg$ samples a dependency tree $\tree$ with the correct probability.
We have $\nN$ iterations of the main loop, each call to $\sampleEdge$ takes $\bigo{\nN}$ times and each call to $\condition$ takes $\bigo{\nN^2}$ time.
These runtimes are easily observed from the pseudocode.
Therefore, $\colbournAlg$ has a runtime of $\bigo{\nN^3}$.
\end{proof}

\tagmcbegin{tag=Figure,alttext={The figure contains a plot of the average sample time measured in seconds on a log scale, against the log length of the sentence. The plot contains two lines, one corresponding to Wilson's algorithm and one corresponding to Colbourn's algorithm. Wilson's algorithm has a shallower gradient and a lower intercept and lies beneath Colbourn's line for every measurement.}}
\begin{figure}[t]
    \centering
    \includegraphics[width=0.47\textwidth]{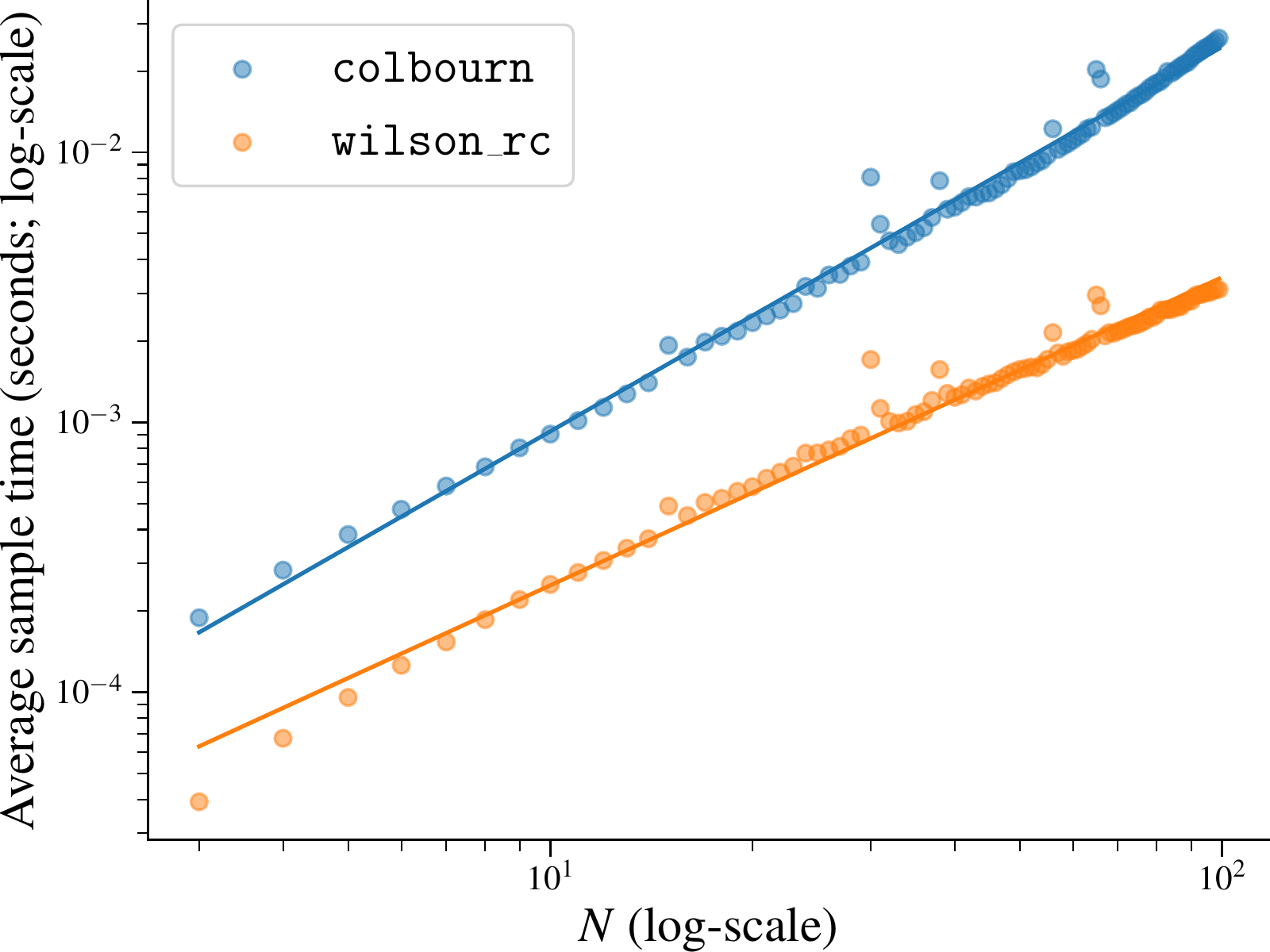}
    \caption{Runtime experiment for sampling using $\wilsonRCAlg$ and $\colbournAlg$. For each graph size, we randomly generated $100$ graphs and took $20$ samples from each graph.
    The best fit lines for $\colbournAlg$ and $\wilsonRCAlg$ have slopes of $1.42$ and $1.14$ respectively.
    }
    \label{fig:experiment}
\end{figure}
\tagmcend

\subsection{Runtime Experiment}\label{sec:experiment}
We conduct a brief runtime experiment for $\colbournAlg$ and $\wilsonRCAlg$ (see \cref{sec:wilson}) whose runtimes are $\bigo{\nN^3}$ and $\bigo{\hittingTime}$ respectively.
We artificially generate random complete graphs of increasing size and measure the average sample time of each algorithm.\footnote{The experiment was conducted using an Intel(R) Core(TM) i7-7500U processor with 16GB RAM.}
The results of the experiment are shown in \cref{fig:experiment}.
We note that despite $\colbournAlg$ being slower, the best-fit line for $\colbournAlg$'s runtime has a slope of $1.42$, suggesting it is much faster in practice than its complexity bound $\bigo{\nN^3}$.\footnote{We would expect the slope to be $\approx 3$ to match the complexity bound.}

\section{Sampling Without Replacement}\label{sec:swor}
In this section, we present a novel extension to $\colbournAlg$ that can sample dependency trees without replacement.
SWOR algorithms are useful when we must sample multiple trees from the same graph.
Specifically, when the distribution of trees over the graph is skewed so that a normal sampling algorithm frequently samples the same trees.
This is often the case when the edge weights have been learned using a neural model \citep{dozat, ma}.
The SWOR algorithm we present follows the scheme of \citet{shi2020}.

In order to use $\colbournAlg$ to sample without replacement, we need an expression of the edge marginals conditioned on the set of previously sampled trees.
If $\treeset$ is the set of previously sampled trees, then we need to compute the following marginal probability efficiently
\begin{equation}
    \prob(\bedgeij \mid \treeset) = \frac{1}{\Z_{\treeset}} \sum_{\tree\in\dtrees_{ij}\setminus\treeset} \weight{\tree}
\end{equation}
where $\dtrees_{ij}$ is the set of all dependency trees that contain edge $\edgeij$ and
\begin{equation}
    \Z_{\treeset} \defeq\  \monstersum{\tree\in\dtrees\setminus\treeset}\ \  \weight{\tree} = \Z - \sum_{\tree\in\treeset} \weight{\tree}
\end{equation}

\begin{lemma}\label{lemma:swor}
For any graph $\graph$, set of trees $\treeset$, and edge $\edgeij\in\edges$,
\begin{align}\label{eq:new-marginal}
\!\!\prob(\bedgeij \mid \treeset) = \frac{1}{\Z_{\treeset}} \! \left[ \Z\!\cdot\! \prob(\bedgeij) \!-\!\! \sum_{\tree \in \treesetij}\! \weight{\tree} \right]\!
\end{align}
where $\treesetij\subseteq\treeset$ is the set of trees in $\treeset$ that contain the edge $\edgeij$.
\end{lemma}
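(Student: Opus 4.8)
The plan is to start from the definition of $\prob(\bedgeij \mid \treeset)$ stated just above the lemma and reduce it to the claimed form by a single set decomposition. Since $\treeset\subseteq\dtrees$, the set $\dtrees_{ij}\setminus\treeset$ is exactly $\dtrees_{ij}$ with the previously sampled trees that contain $\edgeij$ removed, and that collection of removed trees is precisely $\treesetij$ by its definition. Hence
\begin{equation}
\sum_{\tree\in\dtrees_{ij}\setminus\treeset}\weight{\tree}
 = \sum_{\tree\in\dtrees_{ij}}\weight{\tree} \;-\; \sum_{\tree\in\treesetij}\weight{\tree}.
\end{equation}

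The only remaining step is to identify the first term on the right with $\Z\cdot\prob(\bedgeij)$. This is immediate from the definition of the edge marginal, $\prob(\bedgeij)=\frac{1}{\Z}\sum_{\tree\in\dtrees_{ij}}\weight{\tree}$ — the same quantity computed in closed form by \cref{eq:marginal} and used in the proof of \cref{thm:colbourn}. Substituting this into the display above and factoring out $\frac{1}{\Z_{\treeset}}$ from the definition of $\prob(\bedgeij\mid\treeset)$ gives \cref{eq:new-marginal} directly.

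I do not expect a genuine obstacle here: the argument is a one-line partition of $\dtrees_{ij}$ together with the definition of the marginal. The two points worth stating carefully are (i) that $\treeset\subseteq\dtrees$, so $\dtrees_{ij}\cap\treeset=\treesetij$ and the subtraction is well defined, and (ii) that $\Z$ and $\prob(\bedgeij)$ are already available from $\colbournAlg$, so the only new bookkeeping is the running sum $\sum_{\tree\in\treesetij}\weight{\tree}$, which can be updated incrementally each time a tree is appended to $\treeset$. This last observation is what makes the SWOR marginal cheap to evaluate, and I would carry it forward into the algorithmic discussion that follows the lemma.
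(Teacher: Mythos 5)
Your proof is correct and follows exactly the same route as the paper's: start from the definition of $\prob(\bedgeij \mid \treeset)$, split $\sum_{\tree\in\dtrees_{ij}\setminus\treeset}\weight{\tree}$ into $\sum_{\tree\in\dtrees_{ij}}\weight{\tree} - \sum_{\tree\in\treesetij}\weight{\tree}$, and recognize the first term as $\Z\cdot\prob(\bedgeij)$. Your added remarks on $\treeset\subseteq\dtrees$ and on incrementally maintaining $\sum_{\tree\in\treesetij}\weight{\tree}$ are sensible but not part of the lemma's proof itself.
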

\begin{proof}
\begin{subequations}
\begin{align}
    &\prob(\bedgeij \mid \treeset) \\
    &= \frac{1}{\Z_{\treeset}} \sum_{\tree\in\dtrees_{ij}\setminus\treeset} \weight{\tree} \\
    &= \frac{1}{\Z_{\treeset}}\left[ \sum_{\tree\in\dtrees_{ij}} \weight{\tree} - \sum_{\tree\in\treeset_{ij}} \weight{\tree} \right] \\
    &= \frac{1}{\Z_{\treeset}} \left[ \Z\, \prob(\bedgeij) - \sum_{\tree \in \treeset_{ij}} \weight{\tree} \right]
\end{align}
\end{subequations}
\end{proof}
\noindent \cref{lemma:swor} then gives a new formula to use for sampling edges by re-weighting the probability of an edge.\footnote{Re-weighting has been recently used by \citet{stanojevic-cohen-2021} to speed-up algorithms for single-root tree decoding algorithms.}
We can then compute the marginal distribution for an incoming edges to a node in $\bigo{\nN + \nK}$ time. 
Note that \cref{eq:new-marginal} makes explicit use of $\Z$ which is not needed for the original marginals in \cref{eq:marginal}.
Consequently, as we sample an edge from the tree, we must condition $\Z$ as well as $\cached$.
Fortunately, this can be done in $\bigo{\nN}$ using the matrix determinant Lemma, which states that for any matrix $\mat{A}\in\real^{\nN\times\nN}$ and column vectors $\vu,\vv\in\real^{\nN}$
\begin{equation}
    \abs{\mat{A}+\vu\vv^\top} = \abs{\mat{A}}(1+\vv^{\top}\inv{\mat{A}}\vu)
\end{equation}
Furthermore, at each conditioning step, we must also update $\treeset$ (and $\Z_{\treeset}$), to only include the sampled trees containing the new sampled edge.
These can both be achieved in $\bigo{\nK}$ time where $\nK$ is the number of trees that we sample.
The pseudocode for the sampling and conditioning steps are given as $\sampleEdgeSwor$ and $\conditionSwor$ in \cref{alg:swor}.
The sampling algorithm its self, $\sworAlg$ is similar to $\colbournAlg$ in \cref{alg:colbourn}.
However, it samples $\nK$ dependency trees rather than a single dependency tree and stores additional variables in order to cache frequently used values such as the original $\Z$, $\lap$, and $\cached$ values.

\begin{figure}[t!]
    \centering

\begin{flushleft}
\hspace{0.5em}
\textbf{Global Variables}: \globalvariable{$\lap$}, \globalvariable{$\cached$}, \globalvariable{$\treeset$}, \globalvariable{$\Z$}, and \globalvariable{$\Z_{\treeset}$}
\end{flushleft}
\begin{algorithmic}[1]
    \Func{$\algCall{\sworAlg}{\graph, \nK}$}
    \LinesComment{Sample $\nK$ dependency trees without replacement from a graph $\graph$; requires $\bigo{\nK\nN^3 + \nK^2\nN}$ time, $\bigo{\nN^2 + \nK\nN}$ space.}
    \State $\lap' \gets \lapAlgCall{\graph}$ 
    \State $\Z' \gets \abs{\lap'};\, \cached' \gets \lap'^{-\top}$ \Comment{$\bigo{\nN^3}$}
    \State $\treeset' \gets [\, ];\, \Z'_{\treeset} \gets \Z'$
    \For{$k\in\{1,\dots,\nK\}$}
    \State $\globalvariable{\treeset} \gets \treeset';\, \globalvariable{\Z} \gets \Z'$
    \State $\globalvariable{\Z_{\treeset}} \gets \Z'_{\treeset}$ 
    \State $\globalvariable{\lap} \gets \lap';\, \globalvariable{\cached} \gets \cached';\, \tree \gets [\, ]$
    \For{ $j\in\nodes\setminus\{\root\}$} 
        \State $e \gets \algCall{\sampleEdgeSwor}{j}$ 
        \State $\tree.\mathrm{append}(e)$ 
        \State $\algCall{\conditionSwor}{e}$ 
    \EndFor
    \State $\treeset'.\mathrm{append}(\tree);\, \Z'_{\treeset} \minusequal \weight{\tree}$ 
    \EndFor
    \State \Return $\treeset'$
    \EndFunc
    
    \Func{$\algCall{\sampleEdgeSwor}{j}$}
    \LinesComment{Sample an incoming edge to $j$ using global variables $\cached$, $\Z$, and $\Z_{\treeset}$ as in \cref{eq:new-marginal}; requires $\bigo{\nN}$ time, $\bigo{\nN}$ space.}
    \State $\mat{m} \gets \zerovector$
    \State $\matrixelem{m}{\root} \plusequal \globalvariable{\Z}\, \wedge{\root}{j}\globalvariable{\cachedelem{1j}} - \sum_{\tree\in T_{\root j}} \weight{\tree}$ \label{line:sub1}
    \For{$i\in\nodes\setminus\{\root, j\}$} \Comment{$\bigo{\nN}$}
    \State $\matrixelem{m}{i} \plusequal \Z\, \wedge{i}{j}\left(\delta_{j\neq 1}\globalvariable{\cachedelem{jj}} - \delta_{i\neq 1}\globalvariable{\cachedelem{ij}}\right)$
    \State $\matrixelem{m}{i} \minusequal \sum_{\tree\in\treesetij} \weight{\tree}$ \label{line:sub2}
    \EndFor
    \State \Return sample from $\frac{1}{\globalvariable{\Z_{\treeset}}} \mat{m}$ 
    \EndFunc

    \Func{$\algCall{\conditionSwor}{e}$}
    \LinesComment{Condition the Laplacian, the transpose of the Laplacian inverse, the partition partition function, and the set of previously sampled trees to always include $e$ in any tree; requires $\bigo{\nN^2+\nK}$ time, $\bigo{\nN^2}$ space.}
    \State Let $e=\edgeij$
    \If{$i=\root$}
    \State $\vu \gets \wedge{\root}{j}\onehot{1} - \globalvariable{\lapelem{[:, j]}}$
    \Else:
    \State $\vu \gets \wedgeij (\truth{j\neq 1}\onehot{j} - \truth{i\neq 1}\onehot{i}) - \globalvariable{\lapelem{[:, j]}}$ %
    \EndIf
    \State $\globalvariable{\lapelem{[:, j]}} \plusequal \vu $ 
    \State $\globalvariable{\Z} \timesequal 1 + \vu^\top\cachedelem{[j, :]}$ 
     \State $\cached \minusequal {(\globalvariable{\cachedelem{[j, :]}}\vu^\top\globalvariable{\cached})}\ /\ {(1 + \vu^\top\globalvariable{\cachedelem{[j, :]})}}$ 
    \State $\globalvariable{\treeset} \gets \treesetij$ 
    \State $\globalvariable{\Z_{\treeset}} \gets \globalvariable{\Z} - \sum_{\tree\in\globalvariable{\treeset}}\weight{\tree}$ 
    \EndFunc
\end{algorithmic}

    \caption{Algorithm for sampling dependency trees without replacement.
    }
    \label{alg:swor}
\end{figure}

\begin{thm}\label{thm:swor}
For any graph $\graph$ and $\nK > 0$, $\algCall{\sworAlg}{\graph, \nK}$ samples $\nK$ dependency trees without replacement, where $\tree$ is sampled with probability
\begin{equation}\label{eq:swor}
    \prob(\tree \mid \treeset)=\frac{\truth{\tree\not\in\treeset}}{\Z_{\treeset}}\prod_{\edgeij\in\tree}\wedgeij
\end{equation}
where $\treeset$ is the set of trees sampled prior to $\tree$.
Furthermore, $\sworAlg$ runs in $\bigo{\nK\nN^3 + \nK^2\nN}$ time.
\end{thm}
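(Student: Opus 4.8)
The plan is to mirror the proof of \cref{thm:colbourn}: decompose $\prob(\tree\mid\treeset)$ by the chain rule as in \cref{eq:cond-prob}, maintain an inductive invariant on the global variables across the inner loop, and then check that the per-step probabilities telescope to the claimed joint probability in \cref{eq:swor}. The two ingredients that are new relative to \cref{thm:colbourn} are that $\sampleEdgeSwor$ uses the re-weighted marginal of \cref{lemma:swor} in place of \cref{eq:marginal}, and that $\conditionSwor$ additionally maintains $\globalvariable{\Z}$, $\globalvariable{\treeset}$, and $\globalvariable{\Z_\treeset}$.

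Fix the set $\treeset$ of trees drawn in earlier rounds, and for a partial tree $\forestn{n}=[e_1,\dots,e_{n-1}]$ let $R_n\defeq\sum_{\tree:\,\forestn{n}\subseteq\tree,\ \tree\notin\treeset}\weight{\tree}$ be the total weight of dependency trees that extend $\forestn{n}$ and have not yet been sampled. The core of the argument is the inductive invariant that, at the $n\th$ call to $\sampleEdgeSwor$ within a round, $\globalvariable{\lap}$ and $\globalvariable{\cached}$ hold the Laplacian and transposed inverse of $\treeinclude{\graph}{\forestn{n}}$ (inherited verbatim from \cref{thm:colbourn} via \cref{lemma:lape} and Sherman--Morrison), and in addition $\globalvariable{\Z}=\sum_{\tree:\,\forestn{n}\subseteq\tree}\weight{\tree}$, $\globalvariable{\treeset}=\{\tree\in\treeset:\forestn{n}\subseteq\tree\}$, and $\globalvariable{\Z_\treeset}=\globalvariable{\Z}-\sum_{\tree\in\globalvariable{\treeset}}\weight{\tree}=R_n$. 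The base case holds by the initialization at the top of each outer iteration of $\sworAlg$. For the inductive step, applying the matrix determinant lemma to the rank-one column replacement $\vu\vv^\top$ of \cref{lemma:lape} shows that $\conditionSwor$'s update $\globalvariable{\Z}\timesequal 1+\vu^\top\globalvariable{\cachedelem{[j,:]}}$ produces $\sum_{\tree:\,\forestn{n+1}\subseteq\tree}\weight{\tree}$; and since $\{\tree\in\treeset:\forestn{n}\subseteq\tree,\ e_n\in\tree\}=\{\tree\in\treeset:\forestn{n+1}\subseteq\tree\}$, the reassignments $\globalvariable{\treeset}\gets\treesetij$ and $\globalvariable{\Z_\treeset}\gets\globalvariable{\Z}-\sum_{\tree\in\globalvariable{\treeset}}\weight{\tree}$ restore the remaining two invariants.

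Given the invariant, \cref{lemma:swor} applied to $\treeinclude{\graph}{\forestn{n}}$ (with its previously-sampled set $\globalvariable{\treeset}$) shows that the unnormalized weight $\sampleEdgeSwor$ assigns to an incoming edge $\edgeij$ of node $j$ equals $\globalvariable{\Z}\,\marginal{i}{j}-\sum_{\tree\in\globalvariable{\treeset},\,\edgeij\in\tree}\weight{\tree}=\sum_{\tree:\,\forestn{n+1}\subseteq\tree}\weight{\tree}-\sum_{\tree\in\treeset:\,\forestn{n+1}\subseteq\tree}\weight{\tree}=R_{n+1}$, where $\forestn{n+1}=\forestn{n}\cup\{\edgeij\}$; dividing by $\globalvariable{\Z_\treeset}=R_n$ yields $\prob(e_n\mid\forestn{n},\treeset)=R_{n+1}/R_n$. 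These weights are nonnegative and sum to one over the incoming edges of $j$, so each sampling step is well defined whenever $R_n>0$; inductively $R_n>0$ for every reachable $\forestn{n}$, since such a $\forestn{n}$ extends to some as-yet-unsampled tree, and hence no division by zero occurs as long as $\Z_\treeset>0$. Multiplying over $n=1,\dots,\nN$ telescopes to $\prob(\tree\mid\treeset)=\prod_{n=1}^{\nN}R_{n+1}/R_n=R_{\nN+1}/R_1$, where $R_1=\Z-\sum_{\tree\in\treeset}\weight{\tree}=\Z_\treeset$ and $R_{\nN+1}=\truth{\tree\notin\treeset}\weight{\tree}$ because $\forestn{\nN+1}=\tree$ and the only dependency tree containing all of $\tree$ is $\tree$ itself. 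This is exactly \cref{eq:swor}; in particular a tree already in $\treeset$ is drawn with probability zero, so the $\nK$ returned trees are pairwise distinct, i.e.\ sampled without replacement.

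For the runtime, the setup costs $\bigo{\nN^3}$ for $\lapAlg$, the determinant, and the matrix inverse. The outer loop runs $\nK$ times, and each iteration runs the inner loop $\nN$ times: a call to $\sampleEdgeSwor$ is $\bigo{\nN+\nK}$ (the $\bigo{\nK}$ term being the sums over the previously-sampled trees containing the candidate edge), and a call to $\conditionSwor$ is $\bigo{\nN^2+\nK}$ (Sherman--Morrison in $\bigo{\nN^2}$, the determinant update in $\bigo{\nN}$, and recomputing $\globalvariable{\Z_\treeset}$ over $\globalvariable{\treeset}$ in $\bigo{\nK}$). Thus each outer iteration is $\bigo{\nN^3+\nK\nN}$ and the total is $\bigo{\nK\nN^3+\nK^2\nN}$. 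The step I expect to be the main obstacle is nailing down this joint invariant — in particular verifying that the determinant-lemma update of $\globalvariable{\Z}$ together with the progressive restriction of $\globalvariable{\treeset}$ makes the per-node weight of \cref{lemma:swor} collapse to exactly $R_{n+1}$, which is what makes the chain-rule product telescope.
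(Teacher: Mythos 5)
Your proposal is correct and follows the same route as the paper's proof: chain-rule decomposition over the $\nN$ edges, \cref{lemma:swor} for the re-weighted per-edge marginals, and the Sherman--Morrison and matrix-determinant-lemma updates to maintain $\cached$, $\Z$, and $\Z_\treeset$, with an identical runtime accounting. In fact your write-up is more complete than the paper's: the explicit invariant $R_n$ and the telescoping $\prod_n R_{n+1}/R_n = R_{\nN+1}/R_1$ make rigorous both the inner induction that the paper defers as ``analogous to \cref{thm:colbourn}'' and the fact that a tree already in $\treeset$ is drawn with probability zero (the indicator $\truth{\tree\not\in\treeset}$ in \cref{eq:swor}), which the paper asserts but never explicitly verifies.
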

\begin{proof}
To prove correctness, we prove by induction that the $\nK\th$ sampled tree is sampled with the probability in \cref{eq:swor}.

\begin{enumerate}[itemsep=0pt, topsep=0pt, leftmargin=10pt]
\item[] \case{Base case} Then $\nK=1$ and so $\treeset'=\emptyset$ and $\Z'_{\treeset}=\Z'$.
Therefore, $\sworAlg$ executes $\colbournAlg$ and samples tree $\tree$ with probability $\prob(\tree)$ as expected.

\item[] \case{Inductive step}
Assume that $\treeset'$ contains $\nK-1$ trees, which were each sampled with the correct probability.
Then $\Z'_{\treeset}=\Z'-\sum_{\tree\in\treeset'}\weight{\tree}$ and so by \cref{lemma:swor}, $\sampleEdgeSwor$ will sample the first edge of the new tree with the correct probability.
We can then prove that all edges of the new tree are sampled with the correct probability using an inductive proof analogous to \cref{thm:colbourn}.
\end{enumerate}

We require $\bigo{\nN^3}$ to find $\Z'$ and $\cached'$.
Then for each of the $\nK$ sampled trees, we have $\nN$ iterations of the main loop.
Each call to $\sampleEdgeSwor$ takes $\bigo{\nN+\nK}$ times and each call to $\conditionSwor$ takes $\bigo{\nN^2+\nK}$ time.
These runtimes are easily observed from the pseudocode.
Therefore, $\sworAlg$ has a runtime of $\bigo{\nK\nN^3 + \nK^2\nN}$.
\end{proof}

\section{Conclusion}
We presented two efficient approaches to sample spanning trees subject to a root constraint, which were based on prior algorithms by \citet{wilson96} and \citet{colbourn96}.
While \citet{wilson96}'s $\bigo{\hittingTime}$ algorithm was more rapid, \citet{colbourn96}'s $\bigo{\nN^3}$ algorithm is extendable to a novel sampling without replacement algorithm.
To the best of our knowledge, not much work has been done in graph-based dependency parsing to sample dependency trees, and none has used sampling without replacement.
We hope that this paper serves as a tutorial for how this can be done and encourages the use of sampling in future work.\looseness=-1

\section*{Acknowledgments}
We would like to thank all the reviewers for their invaluable feedback and time spent engaging with our work. The first author is supported by the University of Cambridge School of Technology Vice-Chancellor's Scholarship as well as by the University of Cambridge Department of Computer Science and Technology's EPSRC.

\bibliography{emnlp2021}
\bibliographystyle{acl_natbib}

\end{document}